\documentclass[11pt, letterpaper]{article}
\usepackage[margin=.8in]{geometry}
\RequirePackage{fancyhdr}
\RequirePackage{xcolor} 
\RequirePackage{algorithm}
\RequirePackage{algorithmic}
\RequirePackage{natbib}
\RequirePackage{eso-pic} 
\RequirePackage{forloop}
\RequirePackage{url}
\RequirePackage{caption}
\usepackage{wrapfig}

\usepackage{microtype}
\usepackage{graphicx}
\usepackage{subfigure}
\usepackage{booktabs} 

\usepackage[dvipsnames]{xcolor}
\usepackage[colorlinks=true]{hyperref}
\definecolor{mydarkblue}{rgb}{0,0.08,0.45}
\hypersetup{%
,urlcolor=mydarkblue
,citecolor=mydarkblue
,linkcolor=red
}
\usepackage{url}

\usepackage{amsmath,amsfonts,bm}
\usepackage{amssymb}
\usepackage{mathtools}
\usepackage{bbm}

\usepackage[capitalize,noabbrev]{cleveref}

\usepackage{amsthm}
\theoremstyle{plain}

\usepackage{amsthm}

\newtheorem{lem}{Lemma}

\newtheorem{prop}{Proposition}

\def\calL{{\mathcal{L}}}

\def\calX{{\mathcal{X}}}
\def\calY{{\mathcal{Y}}}

\newcommand{\E}{\mathbb{E}}

\DeclareMathOperator*{\argmax}{arg\,max}

\newcommand{\regret}{\mathsf{regret}}

\begin{document}
\title{Best-of-Tails: Bridging Optimism and Pessimism\\in Inference-Time Alignment}
\date{}
\author{
    Hsiang~Hsu\thanks{Correspondence to: Hsiang Hsu \texttt{<hsiang.hsu@jpmchase.com>}.}, Eric~Lei, and~Chun-Fu~(Richard)~Chen\\
    JPMorganChase Global Technology Applied Research
}

\maketitle

\begin{abstract}
    Inference-time alignment effectively steers large language models (LLMs) by generating multiple candidates from a reference model and selecting among them with an imperfect reward model. However, current strategies face a fundamental dilemma: ``optimistic'' approaches like Best-of-$N$ suffer from reward hacking, while ``pessimistic'' regularized methods often stifle the exploration needed to discover high-quality responses. In this work, we formalize this trade-off through the lens of regret minimization, demonstrating that the optimal strategy depends critically on the tail behavior of the reward distribution. We show theoretically that light-tailed regimes favor optimism to unearth high-quality outliers, whereas heavy-tailed regimes require pessimism to guard against reward mis-calibration in the extremes. Guided by this insight, we introduce Best-of-Tails (BoT), an adaptive inference-time alignment framework that uses Tsallis divergence as a tunable regularizer to provide a finer granularity of interpolation between these extremes. BoT uses the Hill estimator to characterize reward-tail heaviness on a per-prompt basis and dynamically adjusts its selection rule to balance exploration gains against alignment error. Across math, multiple-choice reasoning, and human-preference evaluations, BoT improves alignment performance across a range of reference and reward model configurations relative to fixed-strategy baselines.
\end{abstract}

\textbf{Keywords}: Inference-time alignment, inference-time scaling, Best-of-N, reward hacking, tail behavior, tail-adaptive selection, Tsallis divergence, Hill estimator, regret minimization.

\section{Introduction}
Inference-time scaling has emerged as a key mechanism for adapting large language models (LLMs) to complex real-world tasks \citep{brown2024large, snell2024scaling}. 
Extending traditional scaling laws \citep{kaplan2020scaling} to the inference phase, models are allocated additional compute at deployment---effectively giving them more time to ``think''---so they can search a richer response space and select outputs that are more accurate and contextually appropriate. 
This paradigm has helped unleash latent capabilities of LLMs, including improved reasoning through chain-of-thought (CoT) generation \citep{wei2022chain, feng2023towards}, iterative self-correction and self-evaluation \citep{wu2025meta, ren2023self, asai2024self}, and longer-horizon planning via structured search \citep{zhang2024rest, yao2023tree}.

The principles of inference-time scaling have also found a natural and increasingly important application in LLM alignment, where the goal is to steer model's behavior toward human preferences such as correctness, helpfulness, and safety \citep{christiano2017deep, schulman2017proximal, rafailov2023direct}. 
Rather than permanently encoding preferences into model weights through fine-tuning, inference-time alignment uses extra computation at inference to evaluate and select among multiple candidates produced by a reference model, guided by a reward model (RM) \citep{stiennon2020learning, jinnai2024regularized, verdun2025soft, gui2024bonbon, geuter2025guided, huang2025best, khalaf2025inference}. 
This approach offers two practical advantages: it avoids the expense and operational burden of repeated re-training, and enables rapid adaptation to new tasks and deployments by simply changing the reward objective.

\begin{figure}[!tbh]
  \begin{center}
    \centerline{\includegraphics[width=.7\textwidth]{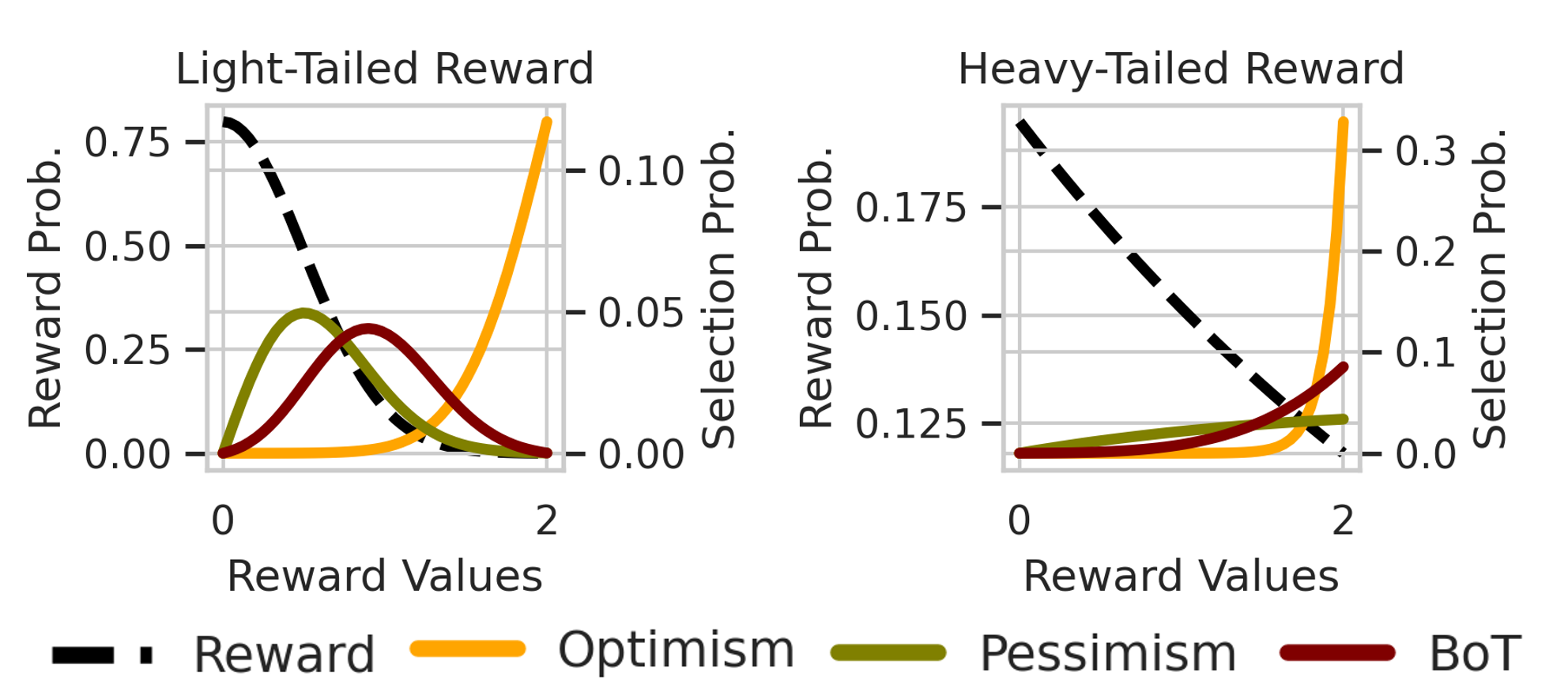}}
    \caption{
    Conceptual illustration of selection probabilities (colored solid lines) for optimistic, pessimistic, and the proposed \texttt{BoT} strategies.
    The plots depict how these strategies re-weight candidates under light-tailed (\textbf{left}) versus heavy-tailed (\textbf{right}) reward distributions (black dashed lines).
    While optimism consistently concentrates mass on the highest rewards (risking reward hacking) and pessimism remains conservative (risking under-exploration), \texttt{BoT} adaptively shifts its strategy: it mimics optimism in light-tailed regimes to exploit safe gains, but pivots toward robust, conservative selection in heavy-tailed regimes to prevent over-optimization.
    }
    \label{fig:intro}
  \end{center}
\end{figure}

The most prominent inference-time alignment strategy is Best-of-$N$ (\texttt{BoN}), which selects the candidate with the highest reward from $N$ generated responses \citep{stiennon2020learning}.
At its core, \texttt{BoN} reflects an ``optimistic'' premise: that improvements in reward scores translate reliably into improvements in true quality.
In practice, however, reward models are only imperfect proxies \citep{laidlaw2024correlated, sun2024rethinking}.
As $N$ increases, \texttt{BoN} is increasingly driven by the extreme tail of the reward distribution, precisely where reward scores can become systematically mis-calibrated and over-estimate true quality, leading to reward hacking\footnote{Also known as reward over-optimization or Goodhart’s Law.} \citep{stiennon2020learning, gao2023scaling, skalse2022defining, kwa2024catastrophic}. 
To mitigate this failure mode, recent work has proposed more ``pessimistic'' alternatives that explicitly curb over-optimization through regularization or conservative selection rules \citep{jinnai2025regularized, huang2024correcting, huang2025best, verdun2025soft}. 
Yet, these safeguards often come at a cost, stifling the model's exploration at inference time to discover genuinely outstanding solutions when the reward signal is informative.
We visually illustrate this dichotomy in Figure~\ref{fig:intro}, which demonstrates how optimistic and pessimistic strategies select the responses depending on the observed reward tail heaviness.

In this paper, we formalize the central trade-off between optimistic and pessimistic inference-time alignment strategies. 
Specifically, we argue that the effectiveness of inference-time alignment depends not only on how accurately the reward model reflects true response quality, but also on the distribution of reward scores induced by the generated candidates. 
In \S~\ref{sec:tail-softbon-itp}, we analyze inference-time alignment through the lens of regret minimization.
Our theoretical results reveal that the reward score tail behavior fundamentally shapes what ``good'' selection looks like.
When rewards are largely concentrated—corresponding to a light-tailed regime—overly pessimistic selection can underutilize inference compute, since aggressive conservatism makes it unlikely to surface meaningfully better responses. 
In this setting, optimistic strategies such as \texttt{BoN} can be effective by actively exploring candidates with higher proxy reward. 
In contrast, when reward scores exhibit heavy tails—indicating substantial headroom for exploration---\texttt{BoN} is more likely to chase extreme proxy scores where mis-calibration is severe, making it particularly vulnerable to reward hacking; here, more pessimistic strategies become essential for robustness.

Guided by this insight, in \S~\ref{sec:BoT} we propose Best-of-Tails (\texttt{BoT}), a new framework that bridges the optimism needed to discover high-reward responses and the pessimism required to ensure their validity. 
\texttt{BoT} introduces Tsallis divergence \citep{tsallis1988possible, lee2019tsallis}, a generalized family of divergences that interpolates between the KL and $\chi^2$ divergences, as a tunable regularization term for response selection. 
By our design, \texttt{BoT} does not rely on a fixed hyperparameter; instead, it first characterizes the reward landscape by estimating the tail index (power coefficient) via the Hill estimator \citep{hill1975simple}.
Based on this estimate, \texttt{BoT} adaptively interpolates between optimistic and pessimistic selection strategies for each input prompt (cf.~Figure~\ref{fig:intro}), offering a robust trade-off between exploration cost and alignment error.
In \S~\ref{sec:experiments}, we empirically demonstrate that \texttt{BoT} consistently outperforms fixed optimistic baselines (e.g., \texttt{BoN}) and fixed pessimistic alternatives \citep{jinnai2024regularized, verdun2025soft, huang2025best} across math, multiple-choice reasoning, and human-preference evaluations, under multiple reference and reward model configurations.
We conclude in \S~\ref{sec:conclusion} with a discussion of limitations and future research directions.

Omitted proofs, additional discussions, experiment setups, and additional experiments are included in the Appendix. 
Code to reproduce our experiments will be released soon. 

\section{Taxonomy of Inference-Time Alignment through Regret Analysis}\label{sec:related-work}
Given a prompt $x \in \mathcal{X}$, an LLM is represented as a (stochastic) policy $\pi: \mathcal{X} \to \Delta(\mathcal{Y})$ that induces a conditional distribution over responses $y \in \mathcal{Y}$. 
In other words, $\pi(y | x)$ specifies the probability of generating a response $y$ conditioned on $x$.
We consider a true (but unknown) reward function $r^*: \mathcal{X} \times \mathcal{Y} \to [0, R_{\textsf{max}}]$ with a bounded range $R_{\textsf{max}}$.
For a prompt $x$, the true reward $r^*$ quantifies the extent to which $y$ aligns with human preference, passes a verifier, or demonstrates sound reasoning. 
In practice, $r^*$ is inaccessible, and an imperfect proxy $\hat{r}$ is often employed instead. 
For a given reward function $r \in \{r^*, \hat{r}\}$, the corresponding expected reward under policy $\pi$ is defined as $J_r(\pi;x) = \E_{y\sim\pi(\cdot| x)}[r(x, y)]$.
Moreover, letting $\|\cdot\|_{L^p(Q)}$ denote the $\calL^p$-norm with respect to a distribution $Q$, we measure the quality of $\hat{r}$ via its expected error with respect to $r^*$ under a policy: $\epsilon_p(\hat{r}; \pi(\cdot|x)) = \|\hat{r}-r^*\|_{L^p(\pi)} = \left(\E_{y\sim \pi(\cdot| x)}[|\hat{r}(x, y)-r^*(x, y)|^p]\right)^{1/p}$.
For simplicity, we denote $\epsilon = \epsilon_2(\hat{r}; \pi_\textsf{ref}(\cdot|x))$.
Finally, let $\mathbbm{1}(\cdot)$ denote the indicator function and $[b]_+ = \max(b, 0)$ for a scalar $b$.
We define the $\alpha$-exponential as $\exp_\alpha(x) = [1+(\alpha-1)x]_+^{\frac{1}{\alpha-1}}$ and $\alpha$-logarithm\footnote{The $\alpha$-exponential and $\alpha$-logarithm converge to the standard exponential and logarithm functions as $\alpha \to 1$ (Lemma~\ref{app:lem:alpha-exp-log}).} as $\log_\alpha(x) = \frac{x^{\alpha-1}-1}{\alpha-1}$.

\textbf{Inference-time alignment.}
Ideally, LLM alignment seeks a policy that maximizes the expected true reward: $\pi^*(\cdot | x) = \argmax_{\pi \in \Delta(\calY)} J_{r^*}(\pi; x)$.
However, since the true reward $r^*$ is typically inaccessible, practitioners must optimize a proxy objective $J_{\hat{r}}(\pi; x)$ instead.
To prevent the model from over-exploiting the imperfect proxy, it is standard to include an explicit distortion penalty. 
For instance, given a $\lambda \geq 0$, reinforcement learning from human feedback (RLHF) \citep{christiano2017deep} uses KL regularization:
\begin{equation}\label{eq:rlhf}
    \max_{\pi \in \Delta(\calY)} J_{\hat{r}}(\pi; x) - \lambda D_\textsf{KL}(\pi|\pi_\textsf{ref}),
\end{equation}
which limits how far the aligned policy can deviate from the reference policy.
Yet, optimizing this objective via iterative parameter updates is often infeasible, as modern LLMs are frequently accessible only as black boxes (e.g., via commercial APIs) or are prohibitively expensive to fine-tune. 
These limitations motivate the study of \emph{inference-time alignment}, which aims to steer model behavior dynamically at inference time without updating model parameters.

An inference-time alignment method begins with a reference policy $\pi_\textsf{ref}$ that is capable of generating high-reward responses \emph{by chance}.
The method then steers the output distribution by \emph{re-weighting} these responses using a weight function $w(y | x)$, which is typically designed to be monotonically increasing in the proxy reward $\hat{r}(x,y)$.
At the distribution level, this induces a re-weighted policy
\begin{equation}
    \pi_w(y|x) \propto \pi_\textsf{ref}(y|x)w(y|x),
\end{equation}
which can be interpreted as an ``aligned'' version of $\pi_\textsf{ref}$ that favors higher proxy-reward responses.
Since direct access to the full reference distribution is generally unavailable, inference-time alignment procedures operate on empirical samples.
Consequently, the procedure first draws a finite set of i.i.d. candidates $\calY_N = \{y_1, \cdots, y_N\} \sim \pi_\textsf{ref}(\cdot|x)$, and then approximate $\pi_w(y | x)$ using techniques such as weighted sampling \citep{verdun2025soft} or rejection sampling \citep{liu2023statistical, block2023sample}.
We denote this finite-sample approximation by $\hat{\pi}_w(y | x)$.
A final ``best'' response is then selected according to $\hat{\pi}_w(y | x)$, typically by sampling \citep{verdun2025soft, huang2025best} or by greedily choosing the highest-scoring candidate \citep{stiennon2020learning}.
This \emph{sample–evaluate–select} pipeline has become a standard framework for inference-time alignment \citep{huang2024self}.

\textbf{Inference-time regret analysis.}
The effectiveness of an inference-time alignment policy $\hat{\pi}_w(y | x)$ can be quantified by the loss in expected true reward caused by imperfect alignment under a finite sample budget $N$.
This quantity, termed the \emph{inference-time regret} \citep{huang2025best, aminian2025best}, is defined as 
\begin{equation}
    \textsf{regret}(\hat{\pi}_w; x) = J_{r^*}(\pi^*; x) - J_{r^*}(\hat{\pi}_w; x).
\end{equation}
Analyzing the upper bound of this regret cleanly isolates key factors that govern the design of effective inference-time alignment strategies.

\begin{prop}\label{prop:general-regret-upper-bound}
    For a given prompt $x\in\calX$, the inference-time regret for a general inference-time alignment policy $\hat{\pi}_w(y|x)$, defined via a re-weighting function $w(y|x)$, admits the following upper bound\footnote{A more general upper bound is provided in Appendix~\ref{app:proof:prop:general-regret-upper-bound}.}
    \begin{equation}\label{eq:alignment-regret-general}
    \begin{aligned}
          \regret(\hat{\pi}_w; x)&\leq 2R_{\textsf{max}}\underbrace{D_\textsf{TV}(\hat{\pi}_w(\cdot|x)\|\pi_w(\cdot|x))}_{(i)} - \underbrace{\Delta(\pi_w(\cdot|x))}_{(ii)}\\
          &\quad+ \underbrace{\sqrt{C^{\pi^*}(x)-1}}_{(iii)}\left(\underbrace{\epsilon_2(\hat{r}; \pi_\textsf{ref}(\cdot|x))}_{(iv)} + \|\hat{r}\|_{L^2(\pi_\textsf{ref}(\cdot|x))}\right)\\
          &\quad+\underbrace{\sqrt{1 + D_{\chi^2}(\pi_w(\cdot|x)\|\pi_\textsf{ref}(\cdot|x))}}_{(v)}\epsilon_2(\hat{r}; \pi_\textsf{ref}(\cdot|x)).
    \end{aligned}
    \end{equation}
    Here, $D_\textsf{TV}(\cdot\|\cdot)$ is the total variation \citep{lehmann2005testing}.
    $\Delta(\pi_w(\cdot|x)) = J_{\hat{r}}(\pi_w) - J_{\hat{r}}(\pi_\textsf{ref})$ represents the alignment gain, i.e., the improvement in expected proxy reward achieved by shifting from $\pi_\textsf{ref}$ to $\pi_w$.
    We denote by $C^{\pi^*}(x) = \E_{y\sim\pi^*(\cdot|x)}[\pi^*(y|x)/\pi_\textsf{ref}(y|x)]$ the coverage of the reference policy $\pi_{\textsf{ref}}$ with respect to the optimal policy $\pi^*$.
    Finally, $D_{\chi^2}(\pi_w(\cdot|x)\|\pi_\textsf{ref}(\cdot|x))$ measures the distortion of the aligned policy $\pi_w$ from $\pi_\textsf{ref}$ via the $\chi^2$ divergence.
\end{prop}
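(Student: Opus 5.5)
We decompose the regret by inserting the idealized re-weighted policy $\pi_w$ and the proxy reward $\hat{r}$ as intermediate quantities. Writing $J^* := J_{r^*}$ and $\hat{J} := J_{\hat r}$, we use the telescoping identity
\begin{equation*}
\regret(\hat{\pi}_w;x) = \bigl[J^*(\pi_w) - J^*(\hat{\pi}_w)\bigr] + \bigl[J^*(\pi^*) - J^*(\pi_\textsf{ref})\bigr] - \bigl[J^*(\pi_w) - J^*(\pi_\textsf{ref})\bigr].
\end{equation*}
We then bound each bracket separately.

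\textbf{First bracket.} Since $r^*\in[0,R_\textsf{max}]$, the standard variational bound gives
\begin{equation*}
|J^*(\pi_w)-J^*(\hat\pi_w)| \le R_\textsf{max}\,\|\pi_w-\hat\pi_w\|_1 = 2R_\textsf{max}D_\textsf{TV}(\hat\pi_w\|\pi_w).
\end{equation*}
This yields term (i). Centering $r^*$ at $R_\textsf{max}/2$ would sharpen the constant, but it is not needed.

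\textbf{Third bracket.} We write $J^*(\pi_w)-J^*(\pi_\textsf{ref}) = \Delta(\pi_w) - \bigl[(\hat J-J^*)(\pi_w) - (\hat J - J^*)(\pi_\textsf{ref})\bigr]$, so that $-\Delta(\pi_w)$ appears as term (ii). The remaining error under $\pi_w$ is controlled by a change of measure and Cauchy--Schwarz:
\begin{equation*}
\E_{\pi_w}[|\hat r - r^*|] = \E_{\pi_\textsf{ref}}\Bigl[\tfrac{\pi_w}{\pi_\textsf{ref}}|\hat r-r^*|\Bigr] \le \sqrt{\E_{\pi_\textsf{ref}}[(\pi_w/\pi_\textsf{ref})^2]}\;\epsilon = \sqrt{1+D_{\chi^2}(\pi_w\|\pi_\textsf{ref})}\,\epsilon,
\end{equation*}
which is term (v)$\cdot\epsilon$. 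The error under $\pi_\textsf{ref}$ has sign $(\hat J-J^*)(\pi_\textsf{ref})$ entering with a plus sign. We will either drop it or absorb it into the coverage term, so we track signs carefully here.

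\textbf{Second bracket.} We handle this via coverage. We write $J^*(\pi^*)-J^*(\pi_\textsf{ref}) = \E_{\pi_\textsf{ref}}[(\tfrac{\pi^*}{\pi_\textsf{ref}}-1)r^*]$ and decompose $r^* = \hat r + (r^*-\hat r)$. Applying Cauchy--Schwarz to each piece, and using $\E_{\pi_\textsf{ref}}[(\pi^*/\pi_\textsf{ref}-1)^2] = C^{\pi^*}(x)-1$, gives the bound $\sqrt{C^{\pi^*}-1}\,(\|\hat r\|_{L^2(\pi_\textsf{ref})}+\epsilon)$. This is exactly (iii) times the bracket containing (iv). Because the weight $\pi^*/\pi_\textsf{ref}-1$ has mean zero, no extra constant term appears. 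The leftover $(\hat J-J^*)(\pi_\textsf{ref})$ from the third bracket, which is at most $\epsilon$ by Jensen, should combine with these terms. If it does not cancel, it is harmless, since it can be absorbed into term (v) by noting that $\sqrt{1+D_{\chi^2}}\ge1$ and re-running Cauchy--Schwarz on the difference $(\pi_w-\pi_\textsf{ref})/\pi_\textsf{ref}$ instead of $\pi_w/\pi_\textsf{ref}$.

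\textbf{Main obstacle.} The main difficulty is this bookkeeping of the $\pi_\textsf{ref}$ error terms, so that the constants match the stated bound exactly. I would first try applying Cauchy--Schwarz to $\E_{\pi_\textsf{ref}}[(\pi_w/\pi_\textsf{ref}-1)(\hat r - r^*)]$. This gives $\sqrt{D_{\chi^2}}\,\epsilon\le\sqrt{1+D_{\chi^2}}\,\epsilon$ and cleanly absorbs the reference-policy error. Summing the three bounds then yields \eqref{eq:alignment-regret-general}.
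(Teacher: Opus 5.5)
Your proof is correct, and it is organized differently from the paper's in a way that matters. The paper anchors only at $\pi_\textsf{ref}$, writing $\regret(\hat\pi_w;x)=[J_{r^*}(\pi^*)-J_{r^*}(\pi_\textsf{ref})]-[J_{r^*}(\hat\pi_w)-J_{r^*}(\pi_\textsf{ref})]$. It handles the first bracket exactly as in your second bracket: centered coverage weight, then Cauchy--Schwarz on $\hat r$ and on $r^*-\hat r$.

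For the second bracket, the paper passes to the proxy reward under the \emph{empirical} policy. It applies the total-variation bound to $\hat r$, obtaining $J_{\hat r}(\hat\pi_w)-J_{\hat r}(\pi_\textsf{ref})\ge\Delta(\pi_w)-2R_\textsf{max}D_\textsf{TV}(\hat\pi_w\|\pi_w)$. It is then left with the error difference $J_{\hat r-r^*}(\hat\pi_w)-J_{\hat r-r^*}(\pi_\textsf{ref})$ under $\hat\pi_w$, which it bounds by a H\"older inequality in terms of $D_\alpha(\pi_w\|\pi_\textsf{ref})$ (then $\alpha=\beta=2$). That last step quietly replaces $\hat\pi_w$ by $\pi_w$, and it is not justified as written. $\hat\pi_w$ is a random, finitely supported distribution whose density ratio to $\pi_\textsf{ref}$ is not controlled by $D_{\chi^2}(\pi_w\|\pi_\textsf{ref})$. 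Repairing it directly would cost a second $2R_\textsf{max}D_\textsf{TV}$ term.

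You instead insert $\pi_w$ into the true-reward telescope and spend the TV bound on $r^*$ itself. The reward error then appears only under $\pi_w$ versus $\pi_\textsf{ref}$, where $\pi_w/\pi_\textsf{ref}=w/\E_{\pi_\textsf{ref}}[w]$ is a genuine density ratio. Cauchy--Schwarz on the centered ratio gives $\sqrt{D_{\chi^2}(\pi_w\|\pi_\textsf{ref})}\,\epsilon\le\sqrt{1+D_{\chi^2}(\pi_w\|\pi_\textsf{ref})}\,\epsilon$. So you obtain the stated bound with the stated $2R_\textsf{max}$ constant (slightly sharper, in fact) and no gap. The paper's presentation buys only the general H\"older/Tsallis form mentioned in its footnote, which is not needed for the $\chi^2$ statement.

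When you write it up, commit to the centered Cauchy--Schwarz of your last paragraph and drop the hedge before it. Bounding $\E_{\pi_w}|\hat r-r^*|$ by $\sqrt{1+D_{\chi^2}}\,\epsilon$ and then ``absorbing'' the $\pi_\textsf{ref}$ error via $\sqrt{1+D_{\chi^2}}\ge 1$ would leave an extra $+\epsilon$ and not match the statement. Also make explicit the assumption $\pi^*\ll\pi_\textsf{ref}$, which both your proof and the paper's use implicitly; without it $C^{\pi^*}(x)=\infty$ and the bound is vacuous.
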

Apart from term (i), which arises solely from the finite-sample approximation of $\pi_w$ using the candidate set $\calY_N$, the remaining terms (ii)–(v) each capture a distinct axis along which inference-time alignment methods can be analyzed and improved.
Crucially, the bound in Eq.~\eqref{eq:alignment-regret-general} offers a more nuanced perspective than recent analyses (e.g., \citet{huang2025best}), which tend to be dominated by the oracle constant $C^{\pi^*}(x)$ and consequently overlook the structural constraints imposed by $\pi_\textsf{ref}$. 
To explicitly investigate how the reward tail governs alignment dynamics, we focus on the interplay between terms (ii) and (iv). 
Together, these terms mirror the core RLHF objective (Eq.~\eqref{eq:rlhf}), forming a trade-off to maximize reward while penalizing distortion.
We next iterate over these terms and survey the related work.

\textbf{Alignment gain (term (ii)).}
Empirically, Best-of-$N$ (\texttt{BoN}) \citep{stiennon2020learning} effectively maximizes the alignment gain
by applying a hard selection rule that chooses the single highest-scoring candidate. 
However, this aggressive optimization is prone to reward hacking \citep{gao2023scaling, skalse2022defining}. 
To mitigate this, soft-BoN (\texttt{sBoN}) \citep{jinnai2024regularized, verdun2025soft} relaxes the hard maximum with a softmax weighting, 
\begin{equation}\label{eq:sbon}
    \pi_\textsf{sBoN}(y| x) \propto \pi_\textsf{ref}(y| x)\exp\left(\hat{r}(x,y)/\lambda\right),
\end{equation}
thereby smoothing the selection to preserve exploration. 
Extending this probabilistic view beyond one-shot re-weighting, \citet{faria2025sample} formulate alignment as an Monte-Carlo Markov chain process guided by process-based rewards; while theoretically more expressive, this approach incurs significantly higher computational overhead due to iterative sampling.

\textbf{Coverage (term (iii)).}
The central premise of inference-time alignment is that $\pi_{\textsf{ref}}$ already places non-negligible probability mass on high-reward responses, so they can be uncovered by sampling \citep{snell2024scaling}.
Inference-time alignment is effective only if $\pi_{\textsf{ref}}$ has sufficient \emph{coverage} of high-reward responses, so that $\calY_N$ is likely to include near-optimal candidates. 
This is captured by $C^{\pi^*}(x)$ in Eq.~\eqref{eq:alignment-regret-general}: smaller values indicate better overlap between $\pi_{\textsf{ref}}$ and $\pi^*$.
Consequently, a key line of work---\emph{inference-aware fine-tuning}---focuses on improving this coverage.
For instance, \citet{gui2024bonbon} fine-tune $\pi_{\textsf{ref}}$ on \texttt{BoN} samples via supervised fine-tuning, while \citet{balashankar2024infalign} optimize reward transformations to stabilize this process.
From a distribution-matching perspective, \citet{amini2024variational} align $\pi_{\textsf{ref}}$ by minimizing the KL divergence between it and the theoretical \texttt{BoN} distribution, whereas \citet{sessa2024bond} employ Jeffreys divergence \citep{jeffreys1946invariant} for greater robustness. Beyond parameter updates, recent work also leverages in-context learning to dynamically enhance coverage at inference time \citep{lin2023unlocking}.

\textbf{Reward estimate (term (iv)).}
Reward estimation shapes inference-time alignment both through the quality of the proxy and through how the reward is modeled and used during generation.
In Eq~\eqref{eq:alignment-regret-general}, reducing $\epsilon_2(\hat{r}; \pi_\textsf{ref}(\cdot|x))$ improves regret, motivating alternative reward objectives and parameterizations.
For example, \citet{sun2024rethinking} argue that the Bradley-Terry model is not uniquely required, proposing order consistency and a classification-style upper-bound objective, while \citet{coste2023reward} learn reward ensembles to reduce annotation noise.
More recently, \citet{geuter2025guided} and \citet{liao2025reward} couple reward modeling with speculative decoding \citep{leviathan2023fast}: improving inference efficiency by using a lightweight auxiliary policy proposes candidates, and a $\pi_\textsf{ref}$-aware reward to guide which candidates are selected.

\textbf{Distortion (term (v)).}
The distortion regularizer is closely related to the induced steering rule: optimizing KL-regularized RLHF in Eq.~\eqref{eq:rlhf} yields an exponential re-weighting of the reference policy, exactly the soft-BoN policy in Eq.~\eqref{eq:sbon} \citep{korbak2022rl, gui2024bonbon, beirami2024theoretical, yang2024asymptotics,khalaf2025inference}.
While KL is a common default, recent work has explored alternative regularizers.
For instance, \citet{jinnai2024regularized} demonstrate that minimizing Bayes risk in \texttt{BoN} corresponds to Wasserstein regularization.
More critical to robust alignment, \citet{huang2024correcting} argue that KL regularization can be too weak to reliably prevent reward hacking.
They instead propose $\chi^2$ regularization, i.e.,
$\max_{\pi \in \Delta(\calY)} J_{\hat{r}}(\pi; x) - \lambda D_{\chi^2}(\pi|\pi_\textsf{ref})$.
This objective induces a \emph{linear} re-weighting of the reference policy, yielding the Inference-Time Pessimism (\texttt{ITP}) policy \citep{huang2025best}: 
\begin{equation}\label{eq:itp}
\pi_\textsf{ITP}(y|x) \propto \pi_\textsf{ref}(y|x)[\hat{r}(x, y)/\lambda]_+.
\end{equation}
This linear scaling represents a more ``pessimistic'' strategy specifically designed to mitigate reward hacking in inference-time alignment.

\section{Reward Tails and Alignment Regret}\label{sec:tail-softbon-itp}
Beyond the explicit terms in the regret bound of Eq.~\eqref{eq:alignment-regret-general}, our analysis highlights a critical, often implicit factor: the \emph{tail behavior} of the imperfect proxy reward $\hat{r}$ under $\pi_\textsf{ref}$. By explicitly coupling alignment gain with distortion, our framework provides a tractable way to analyze how the reward tail impacts inference-time alignment. This perspective is crucial because the alignment trade-off is fundamentally driven by how the weighting function $w(\cdot|x)$ shifts probability mass toward the \emph{high-reward tail}, where rare, extreme proxy scores can dominate steering, amplify distortion, and trigger reward hacking. In contrast, prior analyses such as \citet{huang2025best} do not capture this structural trade-off, focusing instead on broad reward estimation errors.

To systematically analyze how optimistic versus pessimistic strategies govern inference-time alignment, we look beyond aggregate metrics and examine the \emph{tail behavior} of the proxy reward distribution.
In this analysis, We focus on two representative strategies: \texttt{sBoN} (optimistic), which employs exponential re-weighting (Eq.~\eqref{eq:sbon}) and includes standard \texttt{BoN} as a limiting case ($\lambda \to 0$); and \texttt{ITP} (pessimistic), which uses linear re-weighting (Eq.~\eqref{eq:itp}) to explicitly constrain deviation from the reference policy.

We start by characterizing the reward tail of $\hat{r}(x,y)$ under $\pi_\textsf{ref}(\cdot|x)$. 
Fix a prompt $x$ and assume the proxy reward is normalized to $\hat{r}(x,y) \in [0, 1]$.
Since the reward is bounded, its tail behavior is defined by the concentration of probability mass near the maximum value $1$. We characterize this using the \emph{endpoint gap} $U(x,y) = 1-\hat{r}(x,y)$, which maps upper-tail events of $\hat{r}$ to lower-tail events of $U$. Motivated by the Hill estimator from extreme value theory \citep{hill1975simple}, we parameterize this endpoint tail using a prompt-dependent scalar $\kappa(x)>0$ via a Weibull-type model: 
\begin{equation} 
\Pr(U \le u) \approx u^{1/\kappa(x)} \iff U \sim \text{Beta}(1/\kappa(x), 1).
\end{equation} 
In this bounded setting, a ``heavy tail'' ($\kappa \to \infty$) implies a high density of responses near the maximum reward , while a ``light tail'' ($\kappa \to 0$) implies that high-reward responses are exponentially rare.

\subsection{Decomposing the Regret Bound}
To understand how tail behavior governs alignment performance, we analyze the regret bound in Eq.~\eqref{eq:alignment-regret-general}, specifically focusing on how the reward tail impacts the trade-off between alignment gain and distortion.
Isolating the terms dependent on $\pi_w$, we define a regret proxy\footnote{Note that the regret proxy $B(\pi_w;x)$ is actually prompt-dependent. We omit the $x$ in the notation for simplicity.} $B(\pi_w)$ that aggregates finite-sample error (i), distortion (v), and gain (ii).
Using standard rejection sampling bounds and re-parameterizing the $\chi^2$-divergence via moments of $w(\hat{r})$, we obtain a tractable upper bound:
\begin{equation}\label{eq:tradoff-bound}
\begin{aligned}
B(\pi_w) &\leq \underbrace{2\exp\left(-\frac{N\E[w]}{\sup w}\right)}_{\text{Sampling Error}} + \underbrace{\epsilon \frac{\sqrt{\E[w^2]}}{\E[w]}}_{\text{Distortion}} - \underbrace{\Delta(\pi_w)}_{\text{Gain}},
\end{aligned}
\end{equation}
where expectations are taken over $\pi_\textsf{ref}$.
This formulation explicitly reveals an inherent \emph{gain-distortion trade-off}. Any strategy that aggressively re-weights the top-reward region to maximize alignment gain inevitably inflates the higher moments of $w(\hat{r})$. This consequently increases distortion, which can outweigh the benefits of the realized gain, particularly in heavy-tailed regimes.

\subsection{Light vs. Heavy Tails: When to be Optimistic?}\label{subsec:soft-bon-itp-tradeoff}
We now apply this framework to compare \texttt{sBoN} and \texttt{ITP}.
Proposition~\ref{prop:bon-itp-tail} quantifies the regret proxy $B(\pi_w)$ for these strategies under extreme tail conditions.

\begin{prop}\label{prop:bon-itp-tail}Fix a prompt $x$, and assume $U(x, y) = 1 - \hat{r}(x,y) \sim \text{Beta}(1/\kappa, 1)$. Let $\lambda > 0$ be the inverse steering temperature.
\begin{itemize}
\item \textbf{Heavy-tailed regime (large $\kappa$):}
\begin{equation}
\begin{aligned}B(\pi_\textsf{sBoN}) &\leq 2 e^{-\frac{N}{1+1/(\lambda\kappa)}} + \epsilon\left(1+\frac{1}{4\lambda^2\kappa}\right) - \frac{1}{\kappa}\left(1-\lambda(1-e^{-1/\lambda})\right) + o(\kappa^{-1}),\\
B(\pi_\textsf{ITP}) &\leq 2 e^{-\frac{N}{1+\frac{1}{(\lambda+1)\kappa}}} + \epsilon\left(1+\frac{1}{4(\lambda+1)^2\kappa}\right) - \frac{1}{2\kappa(\lambda+1)} + o(\kappa^{-1}).
\end{aligned}
\end{equation}
\item \textbf{Light-tailed regime (small $\kappa$):}
\begin{equation}
\begin{aligned}
B(\pi_\textsf{sBoN}) &\leq 2 e^{-\frac{N(\lambda+\kappa)}{\lambda(\lambda+1)}} + \epsilon\left(1+\frac{\kappa^2}{2\lambda^2}\right) - \left(\frac{\kappa^2}{\lambda}+\frac{\kappa^3}{\lambda^2}\right) + o(\kappa^4),\\
B(\pi_\textsf{ITP}) &\leq 2e^{-\frac{N(\lambda+\kappa)}{\lambda(\lambda+1)}} + \epsilon\left(1+\frac{\kappa^2}{2\lambda^2}\right) - \frac{\kappa^2}{\lambda} + o(\kappa^3).
\end{aligned}
\end{equation}
\end{itemize}
\end{prop}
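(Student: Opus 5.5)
The proof starts from the tractable bound in Eq.~\eqref{eq:tradoff-bound}. For each strategy it reduces to computing three moment functionals of the weight $w(\hat r)$ under $\hat r = 1-U$ with $U\sim\text{Beta}(1/\kappa,1)$: the ratio $\E[w]/\sup w$ for the sampling term, $\sqrt{\E[w^2]}/\E[w]$ for the distortion term, and the gain $\Delta(\pi_w)=\E[w\hat r]/\E[w]-\E[\hat r]$. Write $a=1/\kappa$, so the density of $U$ is $a u^{a-1}$ on $[0,1]$ and $\E[\hat r]=1/(1+a)$.

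For \texttt{sBoN}, take $w=e^{\hat r/\lambda}=e^{1/\lambda}e^{-U/\lambda}$. The constant $e^{1/\lambda}$ cancels in all ratios, leaving the Laplace-type integrals $M(t)=\E[e^{-tU}]=a\int_0^1 u^{a-1}e^{-tu}\,du$ at $t=1/\lambda$ and $t=2/\lambda$, together with $\E[Ue^{-U/\lambda}]=-M'(1/\lambda)$. For \texttt{ITP}, take $w=[\hat r/\lambda]_+$, or equivalently (as the stated constants $\lambda+1$ suggest) the shifted linear weight $w\propto \lambda+1-U$. Everything then reduces to Beta moments $\E[U^k]=a/(a+k)$, which are exact rational functions of $a$.

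The expansions go as follows.
\begin{itemize}
\item \textbf{Heavy tail ($a\to 0$).} The Beta law concentrates at $U=0$ with an atom-like mass $1-O(a)$. Write $M(t)=1-a\int_0^1 u^{a-1}(1-e^{-tu})\,du = 1-a\int_0^1 (1-e^{-tu})u^{-1}\,du + o(a)$. Expanding every ratio to first order in $a$ then gives the $1/\kappa$ coefficients. For \texttt{sBoN} the gain is $a\bigl(1-\lambda(1-e^{-1/\lambda})\bigr)+o(a)$, obtained from $\E[U]-\E[Ue^{-U/\lambda}]/\E[e^{-U/\lambda}]$. For the distortion, $\mathrm{Var}(w)/\E[w]^2=O(a)$, and I use $\sqrt{1+z}\le 1+z/2$. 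To reach the clean constant $1/(4\lambda^2\kappa)$, I plan to bound $\mathrm{Var}(e^{-U/\lambda})$ via a crude Lipschitz/Popoviciu-type inequality rather than computing it exactly; this is acceptable because only an upper bound is claimed. For the sampling term, lower bound $\E[w]/\sup w$ by Jensen in the form $1/(1+1/(\lambda\kappa))$.
\item \textbf{Light tail ($a\to\infty$).} $U$ concentrates at $1$, and it is convenient to set $V=1-U=\hat r\sim\text{Beta}(a,1)$ reversed, so $\E[V^k]=a/(a+k)$ and hence $\kappa$-expansions in powers of $\kappa=1/a$ come from $1/(1+k\kappa)=1-k\kappa+k^2\kappa^2-\dots$. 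Both weights become nearly linear on the small region where $\hat r$ has mass. For that reason the two strategies coincide through order $\kappa^2$, and only \texttt{sBoN}'s quadratic term in $e^{\hat r/\lambda}$ produces the extra $\kappa^3/\lambda^2$ in the gain. I will Taylor-expand $e^{V/\lambda}$ to the needed order, compute $\E[wV]/\E[w]-\E[V]$ term by term, and collect powers of $\kappa$. The distortion and sampling terms follow the same way, using $\sqrt{1+z}\le 1+z/2$ and a lower bound on $\E[w]/\sup w$ matched to the displayed exponent.
\end{itemize}

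I expect the main obstacle to be bookkeeping. The stated constants look partly like convenient upper bounds rather than exact asymptotics: for example, the identical sampling exponents for \texttt{sBoN} and \texttt{ITP} in the light regime, and the $1/(4\lambda^2)$ factor in the heavy regime. So at each term I must decide whether to expand exactly or bound crudely, while keeping the gain term (which enters with a minus sign) exact to the stated order, since there an upper bound on regret needs a lower bound on gain. My first attempt is to compute the gains exactly via the closed forms above and use only one-sided inequalities (Jensen, $\sqrt{1+z}\le 1+z/2$, $e^{-s}\ge 1-s$) for the sampling and distortion terms, absorbing the remaining slack into the $o(\cdot)$ remainders.
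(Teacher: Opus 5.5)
\textbf{Heavy-tailed regime.} Your overall route matches the paper's: read off $\E[w]/\sup w$, $\sqrt{\E[w^2]}/\E[w]$ and $\Delta$ from Eq.~\eqref{eq:tradoff-bound} and expand. Your heavy-tailed part is sound, and more rigorous than the paper's. The paper truncates the Taylor series of $e^{-U/\lambda}$ at second order, even though every moment $\E[U^k]=1/(k\kappa+1)$ is of order $1/\kappa$. Your Laplace form instead gives the exact first-order coefficients of $\mathrm{Peak}-1$ and $\mathrm{Dist}^2-1$: $\int_0^1(1-e^{-u/\lambda})u^{-1}\,du<1/\lambda$ and $\int_0^1(1-e^{-u/\lambda})^2u^{-1}\,du\le 1/(2\lambda^2)$. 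So the displayed constants are legitimate upper bounds. Three details to fix:
\begin{itemize}
\item Use the Lipschitz bound $\mathrm{Var}(e^{-U/\lambda})\le\E[(1-e^{-U/\lambda})^2]\le\E[U^2]/\lambda^2$. Popoviciu only gives an $O(1)$ variance.
\item Jensen, $\E[e^{-U/\lambda}]\ge e^{-1/(\lambda(\kappa+1))}$, agrees with $1/(1+1/(\lambda\kappa))$ only to first order. The $O(N\kappa^{-2})$ discrepancy must be absorbed into $o(\kappa^{-1})$ at fixed $N$.
\item For \texttt{ITP} you must commit to $w=1+\hat r/\lambda=\exp_2(\hat r/\lambda)$, which is what the paper's proof uses. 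The weight $[\hat r/\lambda]_+$ is not equivalent: it gives $\mathrm{Peak}=1+1/\kappa$.
\end{itemize}

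\textbf{Light-tailed regime.} This part has genuine gaps.
\begin{itemize}
\item \emph{Wrong moments.} $\E[V^k]=a/(a+k)$ is the moment of $U$, not of $\hat r$. The correct moments are $\E[\hat r^k]=k!\,\kappa^k/\prod_{j=1}^k(1+j\kappa)$.
\item \emph{Sampling term for \texttt{sBoN}.} The step ``a lower bound on $\E[w]/\sup w$ matched to the displayed exponent'' cannot be carried out. $e^{\hat r/\lambda}$ is nearly linear only where $\hat r$ has mass, but its supremum sits at $\hat r=1$. Hence $\E[w]/\sup w=e^{-1/\lambda}(1+\kappa/\lambda+O(\kappa^2))$, which is $\ge(\lambda+\kappa)/(\lambda(\lambda+1))$ only if $e^{1/\lambda}\le 1+\lambda$; this fails for every $\lambda\le 1$. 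The paper reaches its exponent only by linearizing $e^{\hat r/\lambda}$ to $1+\hat r/\lambda$ \emph{before} taking the supremum. Its own BoN paragraph concedes $\mathrm{Peak}\asymp e^{1/\lambda}$. Moreover, the exponent it actually derives, $N(\lambda+\kappa)/(\lambda+1)$, differs from the stated one by a factor $\lambda$. Even for \texttt{ITP}, $\E[w]/\sup w=(\lambda+\kappa/(1+\kappa))/(\lambda+1)$ dominates the stated exponent only when $\lambda>1$.
\item \emph{Exact gains and distortion.} Computing the gains exactly, as you intend, does not reproduce the displayed terms. With the correct moments,
\[
\Delta_{\texttt{sBoN}}=\kappa^2/\lambda+(\lambda^{-2}-4\lambda^{-1})\kappa^3+O(\kappa^4),\qquad \Delta_{\texttt{ITP}}=\kappa^2/\lambda-(\lambda^{-2}+4\lambda^{-1})\kappa^3+O(\kappa^4).
\]
As a check, to leading order in $1/\lambda$ both reduce to $\mathrm{Var}(\hat r)/\lambda=(\kappa^2-4\kappa^3)/\lambda$. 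In addition, $\mathrm{Dist}_{\texttt{sBoN}}$ contains a $(\lambda^{-3}-2\lambda^{-2})\kappa^3$ term. So your term-by-term route cannot deliver the remainders $o(\kappa^4)$ and $o(\kappa^3)$. The paper's $\kappa^3/\lambda^2$ comes from keeping only $\E[\hat r^m]\approx m!\,\kappa^m$ and subtracting $\E[\hat r]=\kappa+O(\kappa^2)$.
\end{itemize}

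What your method can actually prove in the light-tailed regime is a corrected statement:
\begin{itemize}
\item gains $\kappa^2/\lambda+O(\kappa^3)$, with \texttt{sBoN} ahead of \texttt{ITP} by exactly $2\kappa^3/\lambda^2$;
\item distortion $1+\kappa^2/(2\lambda^2)+O(\kappa^3)$;
\item an \texttt{sBoN} sampling term such as $2\exp(-Ne^{-1/(\lambda(1+\kappa))})$, from Jensen.
\end{itemize}
You should say this explicitly rather than try to force the displayed constants.
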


The asymptotic behaviors in Proposition~\ref{prop:bon-itp-tail} reveal why neither a fixed optimistic nor a fixed pessimistic strategy is universally optimal:
\begin{itemize}
    \item \textbf{Heavy tails favor pessimism (ITP).}
    Here, the reward distribution features a long tail of high scores, creating a high risk of over-fitting (reward hacking).
    Consequently, the primary constraint is preventing unbounded distortion.
    \texttt{sBoN} fails here because its distortion penalty scales as $O(1/\lambda^2)$, exploding under aggressive steering ($\lambda \to 0$).
    In contrast, \texttt{ITP} is inherently robust: its penalty scales as $O(1/(\lambda+1)^2)$, remaining bounded even as $\lambda \to 0$, effectively acting as a saturation mechanism against heavy-tailed noise.
    \item \textbf{Light tails favor optimism (Soft-BoN).}
    In this regime, high-reward candidates are exponentially rare (``finding a needle in a haystack''), while distortion remains naturally mild for both strategies.
    The primary challenge is therefore \emph{gain realization}.
    \texttt{sBoN} proves superior here, with a gain term scaling as $O(\kappa^3/\lambda^2)$ compared to \texttt{ITP}'s $O(\kappa^2/\lambda)$.
    This confirms that the aggressive, exponential re-weighting of \texttt{sBoN} is essential to separate and select rare optimal candidates, whereas the linear re-weighting of \texttt{ITP} is too conservative to fully capture the potential gain.
\end{itemize}
This dichotomy motivates a prompt-adaptive strategy that dynamically interpolates between optimism and pessimism based on the specific tail behavior of each prompt, as developed in the next section.

\section{BoT: A Tail-Adaptive Alignment}\label{sec:BoT}
The dichotomy between the exponential re-weighting of \texttt{sBoN} (optimistic) and the linear re-weighting of \texttt{ITP} (pessimistic) stems fundamentally from the choice of probabilistic divergence used to regularize the policy. 
Specifically, \texttt{sBoN} corresponds to KL-regularization ($D_\textsf{KL}$), while \texttt{ITP} corresponds to $\chi^2$-regularization ($D_{\chi^2}$).
To bridge these regimes, we introduce Best-of-Tails (\texttt{BoT}), a unified framework that adapts the regularization to the specific tail behavior of the prompt.

\texttt{BoT} leverages the Tsallis divergence of order $\alpha>1$ \citep{tsallis1988possible} to smoothly interpolate between the aggressive exploration of KL and the robust mode-seeking of $\chi^2$
\begin{equation}\label{eq:alpha-div}
\begin{aligned}
    D_\alpha(P|Q) &= \E_Q\left[\log_\alpha \left(\frac{Q}{P}\right)\right] =\frac{1}{\alpha-1}\left( \sum_x P(x)^\alpha Q(x)^{1-\alpha} -1 \right).
\end{aligned}
\end{equation}
Crucially, this generalizes the standard regularizers: $D_\alpha$ converges to the KL divergence ($D_\textsf{KL}$) as $\alpha \to 1$, and recovers the $\chi^2$-divergence ($D_{\chi^2}$) at $\alpha = 2$.

For a fixed prompt $x$, we define the prompt-conditional \texttt{BoT} objective analogous to the standard RLHF objective:
\begin{equation}\label{eq:alpha-rlhf}
\calL_\alpha(\pi;x) = \E_{y\sim\pi}[\hat{r}(x,y)] - \lambda D_\alpha\left(\pi(\cdot|x)|\pi_\textsf{ref}(\cdot|x)\right).
\end{equation}
Let $\pi_\textsf{BoT}(\cdot|x)$ denote the optimal solution to Eq.~\eqref{eq:alpha-rlhf}. 
We show that this optimal policy retains a closed-form re-weighting structure.
\begin{prop}\label{prop:alpha-rlhf}
The \texttt{BoT} policy $\pi_\alpha(\cdot|x)$ follows an $\alpha$-exponential re-weighting:
\begin{equation}
\pi_\textsf{BoT}(y|x) \propto \pi_\textsf{ref}(y|x) \exp_\alpha\left(\frac{\hat{r}(x, y)}{\lambda}\right).
\end{equation}
\end{prop}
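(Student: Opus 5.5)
The plan is to solve the concave program in Eq.~\eqref{eq:alpha-rlhf} pointwise through its KKT conditions. First I would rewrite the objective in terms of the likelihood ratio $\rho(y)=\pi(y|x)/\pi_\textsf{ref}(y|x)$, defined on the support of $\pi_\textsf{ref}$. Using the second expression in Eq.~\eqref{eq:alpha-div} with $P=\pi$ and $Q=\pi_\textsf{ref}$, the objective becomes
\[
\calL_\alpha=\sum_y \pi_\textsf{ref}(y|x)\,\rho(y)\,\hat r(x,y)-\frac{\lambda}{\alpha-1}\Bigl(\sum_y \pi_\textsf{ref}(y|x)\,\rho(y)^\alpha-1\Bigr).
\]
It is to be maximized subject to $\rho\ge 0$ and $\sum_y\pi_\textsf{ref}(y|x)\rho(y)=1$. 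Mass placed outside the support of $\pi_\textsf{ref}$ makes the divergence infinite, so that mass can be set to zero.

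Since $\alpha>1$, the map $t\mapsto t^\alpha$ is strictly convex on $t\ge 0$. Hence $\calL_\alpha$ is strictly concave in $\rho$, and the feasible set is convex. The KKT conditions are therefore necessary and sufficient, and the maximizer is unique.

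Next I would form the Lagrangian with a multiplier $\mu$ for the normalization constraint and multipliers $\nu(y)\ge0$ for nonnegativity. Stationarity, after dividing by $\pi_\textsf{ref}(y|x)>0$, gives
\[
\hat r(x,y)-\frac{\lambda\alpha}{\alpha-1}\,\rho(y)^{\alpha-1}+\mu+\nu(y)=0 .
\]
There are two cases, separated by complementary slackness $\nu(y)\rho(y)=0$.
\begin{itemize}
\item If $\rho(y)>0$, then $\nu(y)=0$ and $\rho(y)^{\alpha-1}=\frac{\alpha-1}{\lambda\alpha}\bigl(\hat r(x,y)+\mu\bigr)$.
\item If $\rho(y)=0$, then $\hat r(x,y)+\mu=-\nu(y)\le 0$.
\end{itemize}
Combining the two cases yields
\[
\rho(y)=\alpha^{-\frac{1}{\alpha-1}}\Bigl[\tfrac{(\alpha-1)}{\lambda}\bigl(\hat r(x,y)+\mu\bigr)\Bigr]_+^{\frac{1}{\alpha-1}} .
\]
The bracket has the same form as the $\alpha$-exponential: it is the positive part of an affine function of $\hat r/\lambda$ with slope $\alpha-1$, raised to the power $1/(\alpha-1)$. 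The factor $\alpha^{-1/(\alpha-1)}$ does not depend on $y$ and is absorbed into the proportionality constant. Existence and uniqueness of $\mu$ follow from a continuity argument: $\mu\mapsto\sum_y\pi_\textsf{ref}\rho_\mu$ is continuous and nondecreasing, tends to $0$ as $\mu\to-\infty$, tends to $\infty$ as $\mu\to\infty$, and is strictly increasing wherever it is positive. So exactly one $\mu$ enforces normalization.

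The main obstacle is matching this expression to the stated form $\exp_\alpha(\hat r/\lambda)$ up to proportionality. Unlike the ordinary exponential, $\exp_\alpha(a+b)\neq\exp_\alpha(a)\exp_\alpha(b)$, so the additive constant $\mu$ cannot simply be pulled out as a multiplicative normalizer. My first approach is to write $(\alpha-1)\mu/\lambda=c$ and factor: $[c+(\alpha-1)\hat r/\lambda]_+^{1/(\alpha-1)}=c^{1/(\alpha-1)}\exp_\alpha\bigl(\hat r/(\lambda c)\bigr)$ when $c>0$. The solution is then an $\alpha$-exponential tilt with an effective temperature $\lambda c$ fixed by normalization. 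Equivalently, it is $\exp_\alpha\bigl((\hat r-\tau)/\lambda\bigr)$ with a normalizing threshold $\tau$, which is the standard Tsallis/sparsemax form.

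I would state the result in this sense: the $\propto$ absorbs a reward-independent normalization, entering either as a multiplicative constant after reparametrizing $\lambda$ or as the threshold $\tau$. I would also check the two limits as consistency tests. As $\alpha\to1$, Lemma~\ref{app:lem:alpha-exp-log} gives the sBoN tilt of Eq.~\eqref{eq:sbon}, and the shift becomes a genuine multiplicative constant. At $\alpha=2$, the formula gives the linear rule $[\,\cdot\,]_+$ of Eq.~\eqref{eq:itp}.
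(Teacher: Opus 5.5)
Your route is the paper's route: reparametrize by the likelihood ratio $t=\pi/\pi_\textsf{ref}$, solve the pointwise stationarity condition, and enforce normalization with a multiplier. Your version is more careful where the paper is informal. The nonnegativity multipliers justify the $[\cdot]_+$, strict concavity gives uniqueness, and the monotonicity argument shows the normalizing multiplier exists.

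You are right not to claim literal proportionality to $\exp_\alpha(\hat r/\lambda)$ with the original $\lambda$, and this is exactly where the paper's proof is loose.

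\begin{itemize}
\item The paper uses the generator $\phi_\alpha(t)=\frac{t^\alpha-\alpha t+\alpha-1}{\alpha(\alpha-1)}$. This yields $D_\alpha/\alpha$ rather than the $D_\alpha$ of Eq.~\eqref{eq:alpha-div}, which is why your constants carry extra factors of $\alpha$.
\item The affine part of $\phi_\alpha$ is constant on the feasible set. Solving the unconstrained condition $\hat r=\lambda\phi_\alpha'(t)$ therefore amounts to setting the normalization multiplier to zero. That is the only reason $t^*$ comes out as $\exp_\alpha(\hat r/\lambda)$ with no shift.
\item That $t^*$ is infeasible, since $\E_{\pi_\textsf{ref}}[\exp_\alpha(\hat r/\lambda)]>1$ unless $\hat r=0$ almost surely. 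Rescaling it does not restore optimality, because stationarity fixes the slope of $t^{\alpha-1}$ in $\hat r$.
\item The paper's own closing step, normalizing through $t^*(\hat r+\nu)$, is precisely your threshold form.
\end{itemize}

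A two-point example shows the literal claim fails. Take $\alpha=2$, $\pi_\textsf{ref}=(\tfrac12,\tfrac12)$, $\hat r=(0,1)$, and $D_2$ from Eq.~\eqref{eq:alpha-div}. For $\lambda>\tfrac14$ the optimizer has $\pi(y_2|x)/\pi(y_1|x)=(4\lambda+1)/(4\lambda-1)$, and for $\lambda\le\tfrac14$ it puts all its mass on $y_2$. The stated formula gives $1+1/\lambda$ instead, and the two agree only at $\lambda=\tfrac12$.

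One refinement: your two descriptions are not equivalent in general. The clean statement is $\pi_\textsf{BoT}\propto\pi_\textsf{ref}\,[\hat r+\mu]_+^{1/(\alpha-1)}$, where $\mu$ is the unique root of $\E_{\pi_\textsf{ref}}\bigl[(\hat r+\mu)_+^{1/(\alpha-1)}\bigr]=\bigl(\lambda\alpha/(\alpha-1)\bigr)^{1/(\alpha-1)}$.

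\begin{itemize}
\item Your effective-temperature form $\exp_\alpha\bigl(\hat r/((\alpha-1)\mu)\bigr)$ is valid only when $\mu>0$. This holds if and only if $\lambda>\frac{\alpha-1}{\alpha}\|\hat r\|_{L^{1/(\alpha-1)}(\pi_\textsf{ref})}$.
\item For small $\lambda$ (the experiments use $\lambda=0.01$) and $\alpha$ away from $1$, typically $\mu\le 0$. The optimizer is then sparse, and no positive temperature reproduces it.
\item The threshold form $\exp_\alpha((\hat r-\tau)/\lambda)$ covers both cases. In it the $\lambda$ inside $\exp_\alpha$ is cosmetic, since only the combination $\lambda/(\alpha-1)-\tau=\mu$ matters.
\end{itemize}

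So present the threshold form as the result. Treat the effective-temperature form as the special case in which every response keeps positive weight.
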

The parameter $\alpha$ explicitly controls the behavior of \texttt{BoT}. 
As $\alpha\to 1$, $\exp_\alpha(u) \to e^u$, recovering the $\pi_\textsf{sBoN}$; at $\alpha=2$, $\exp_\alpha(u)$ becomes linear, recovering the $\pi_\textsf{ITP}$.
Intermediate $\alpha$ values provide a continuum between ``optimistic but fragile'' and ``robust but pessimistic,'' allowing the regularization to match the specific tail properties of the proxy reward.
It is this capability to adaptively exploit the reward tail that motivates the name \emph{Best-of-Tails}.

\subsection{Deciding the Prompt-Dependent \texorpdfstring{$\alpha(x)$}{\alpha(x)}}
Building on the analysis in \S~\ref{sec:tail-softbon-itp}, we derive asymptotic expansions of the regret proxy $B(\pi_\textsf{BoT};x)$ to make its dependence on the interpolation parameter $\alpha$, the tail index $\kappa$, and the steering temperature $\lambda$ explicit, and show how optimal interpolation  $\alpha$ adapts to tail heaviness according to $\kappa$ to minimize the regret proxy. 

\begin{prop}\label{prop:trend-a-kappa-lambda}
Fix a prompt $x$ with normalized reward $\hat{r}\in[0,1]$. For $\alpha\in(1,2]$, $\lambda > 0$, and let the $\alpha$-exponential re-weighting be $w_\alpha=[1+(\alpha-1)\frac{\hat{r}}{\lambda}]_+^{\frac{1}{\alpha-1}}$.
The regret proxy $B=B(\pi_\textsf{BoT})$ is given by:
\begin{equation}\label{eq:B-def-clean}
\begin{aligned}
B &= e^{\left(-N m_0(\alpha;\kappa,\lambda)\right)} + \epsilon \frac{\|w_\alpha\|_{L^2}}{\|w_\alpha\|_{L^1}} - \Delta(\pi_\textsf{BoT}),
\end{aligned}
\end{equation}
where $m_0(\alpha;\kappa,\lambda):=\E_\kappa\left[(1-\rho U)^{\frac{1}{\alpha-1}}\right]$ with $\rho=\frac{(\alpha-1)/\lambda}{1+(\alpha-1)/\lambda}$, and the norms are taken regarding $\pi_\textsf{ref}$.
Moreover, the following asymptotic expansions hold: 
\begin{itemize}
\setlength{\itemsep}{0pt}
    \item \textbf{Heavy-tailed regime (large $\kappa$):}
    \begin{equation}\label{eq:B-large-kappa-clean}
    \begin{aligned}
    B \approx \text{const.} &+ \frac{1}{\kappa}\Bigg[\frac{\epsilon}{4(\lambda+\alpha-1)^2} + \frac{Ne^{-N}}{4(\lambda+\alpha-1)^2}\left(4\lambda-\frac{1}{\lambda}+5(\alpha-1)\right) - \frac{\alpha-1}{2(\lambda+\alpha-1)}\Bigg].
    \end{aligned}
    \end{equation}
    \item \textbf{Light-tailed regime (small $\kappa$):}
    \begin{equation}\label{eq:B-small-kappa-clean}
    \begin{aligned}
    B \approx \text{const.} - \frac{\kappa^2}{\lambda} - \frac{(3-2\alpha)\kappa^3}{\lambda^2} + \epsilon\left(1+\frac{\kappa^2}{2\lambda^2}\right).
    \end{aligned}
    \end{equation}
\end{itemize}
\end{prop}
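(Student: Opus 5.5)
We first specialize the general bound of Eq.~\eqref{eq:tradoff-bound} to the $\alpha$-exponential weight of Proposition~\ref{prop:alpha-rlhf}, and then expand each of the three terms under $U\sim\text{Beta}(1/\kappa,1)$ in the two limits $\kappa\to\infty$ and $\kappa\to 0$.

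\textbf{Step 1: identify the three terms.} The distortion term of Eq.~\eqref{eq:tradoff-bound} is $\epsilon\sqrt{\E[w^2]}/\E[w]=\epsilon\|w_\alpha\|_{L^2}/\|w_\alpha\|_{L^1}$, so it needs no further work. For the sampling term, $w_\alpha$ is increasing in $\hat r$. Hence $\sup w_\alpha=(1+(\alpha-1)/\lambda)^{1/(\alpha-1)}$, attained at $\hat r=1$.

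Writing $\hat r=1-U$, we get
\[
\frac{w_\alpha}{\sup w_\alpha}=\Big(\frac{1+(\alpha-1)/\lambda-(\alpha-1)U/\lambda}{1+(\alpha-1)/\lambda}\Big)^{\frac{1}{\alpha-1}}=(1-\rho U)^{\frac{1}{\alpha-1}}.
\]
Therefore $\E[w_\alpha]/\sup w_\alpha=m_0$. This yields the exponent $-Nm_0$; we absorb the constant factor $2$ into the stated form. The gain is $\Delta=\E[\hat r\,w_\alpha]/\E[w_\alpha]-\E[\hat r]$.

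\textbf{Step 2: moments under the Beta law.} Write $\beta=1/(\alpha-1)$ and $a=1/\kappa$. All quantities reduce to moments $\E_\kappa[(1-\rho U)^{p}U^j]$ with $p\in\{\beta,2\beta\}$ and $j\in\{0,1\}$, where the density of $U$ is $a u^{a-1}$.

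\emph{Heavy tail ($a\to0$).} The mass concentrates at $U=0$, and we use
\[
\E[g(U)]=g(0)+a\int_0^1\frac{g(u)-g(0)}{u}\,du+O(a^2).
\]
This gives a const. $+\,a(\cdot)$ expansion of $m_0$, of the ratio $\|w\|_2/\|w\|_1$, and of $\Delta$. For the sampling term we linearize $e^{-Nm_0}$ around $m_0=1$, which produces the $Ne^{-N}$ factor. The $(\lambda+\alpha-1)^{-2}$ denominators come from rewriting $\rho=(\alpha-1)/(\lambda+\alpha-1)$.

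\emph{Light tail ($a\to\infty$).} Now $U$ concentrates at $1$, so $\hat r=1-U$ is small with $\E[\hat r^k]=k!/((a+1)\cdots(a+k))\approx k!\,\kappa^k$. We Taylor-expand in $\hat r$,
\[
w_\alpha=1+\frac{\hat r}{\lambda}+\frac{(2-\alpha)\hat r^2}{2\lambda^2}+O(\hat r^3),
\]
and collect the terms up to $\kappa^3$.

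\textbf{Step 3: assemble the expansions.} In the light-tail case the gain is $\mathrm{Var}(\hat r)/\lambda+(\text{cubic})$. The $\kappa^2/\lambda$ term comes from the linear weight. The $\kappa^3$ coefficient collects the third central moment and the $(2-\alpha)$ curvature, giving $(3-2\alpha)/\lambda^2$. As a consistency check, this must reduce to Proposition~\ref{prop:bon-itp-tail} at $\alpha\to1$ (giving $\kappa^3/\lambda^2$) and at $\alpha=2$ (where the cubic term is absent at this order). The distortion is $\sqrt{1+\mathrm{Var}(w)/\E[w]^2}\approx 1+\kappa^2/(2\lambda^2)$.

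In the heavy-tail case we combine the three $1/\kappa$ coefficients into the bracket in Eq.~\eqref{eq:B-large-kappa-clean}. Again we check against Proposition~\ref{prop:bon-itp-tail} at $\alpha\in\{1,2\}$.

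\textbf{Main obstacle.} The hardest part is the heavy-tail $1/\kappa$ coefficients. The integrals $\int_0^1((1-\rho u)^{p}-1)/u\,du$ are not elementary for general $p$. We will first try expanding them to low order in $\rho$, treating $\rho$ as moderate, which is what the rational form of the stated bracket suggests. We will then track carefully which approximation produces the $4\lambda-1/\lambda+5(\alpha-1)$ combination. Matching signs and constants there, including the $\kappa^3$ coefficient in the light-tail case, is where most errors will arise.
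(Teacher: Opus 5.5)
\textbf{There is a genuine gap in the heavy-tailed step.} Your Step 1 matches the paper's algebraic reduction. Your first-order formula $\E[g(U)]=g(0)+\kappa^{-1}\int_0^1\frac{g(u)-g(0)}{u}\,du+O(\kappa^{-2})$ is also correct. It is more careful than the paper, which truncates $(1-\rho U)^{\gamma}$, $\gamma=\frac{1}{\alpha-1}$, at second order even though every moment $\E[U^j]=\frac{1}{1+j\kappa}$ is of the same order $1/\kappa$.

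The gap is the step you defer: no systematic approximation of your exact coefficients gives the stated bracket.
\begin{itemize}
\item \emph{Distortion.} Keep the terms $j\le 2$ of $\int_0^1\frac{(1-\rho u)^{\gamma}-1}{u}\,du=\sum_{j\ge1}\binom{\gamma}{j}\frac{(-\rho)^j}{j}$; this is precisely the paper's truncation. It correctly gives the distortion coefficient $\frac{(\gamma\rho)^2}{4}=\frac{1}{4(\lambda+\alpha-1)^2}$.
\item \emph{Sampling term.} The same truncation gives $m_0-1\approx\frac{1}{\kappa}\cdot\frac{1-4\lambda-5(\alpha-1)}{4(\lambda+\alpha-1)^2}$, i.e.\ the bracket $4\lambda-1+5(\alpha-1)$, not $4\lambda-\frac{1}{\lambda}+5(\alpha-1)$. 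At $\alpha=2$ the weight is linear and everything is exact, and the coefficient of $Ne^{-N}/\kappa$ is $\frac{1}{\lambda+1}$. The stated form reproduces this only when $\lambda=1$. The $-1/\lambda$ is a typo (the paper's own derivation yields $-1$), so no approximation will ``produce'' it.
\item \emph{Gain.} Expanding to low order in $\rho$ gives $\frac{1}{\kappa}\bigl(1-\int_0^1(1-\rho u)^{\gamma}du\bigr)\approx\frac{\gamma\rho}{2\kappa}=\frac{1}{2\kappa(\lambda+\alpha-1)}$. The stated $\frac{\alpha-1}{2(\lambda+\alpha-1)}=\frac{\rho}{2}$ arises only by setting $\gamma=1$ inside the integral (the paper's ``linearizing near $\alpha=2$''). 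Your planned endpoint check exposes this: as $\alpha\to1$ the stated gain vanishes, while your formula gives $\frac{1}{\kappa}\bigl(1-\lambda(1-e^{-1/\lambda})\bigr)$, as in Proposition~\ref{prop:bon-itp-tail}.
\end{itemize}
To reach the displayed expansion you must state both ad hoc steps explicitly (second-order truncation, and $\gamma=1$ in the gain integral) and replace $-1/\lambda$ by $-1$. Otherwise, state the result with the exact integral coefficients.

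\textbf{The light-tailed check is inconsistent with the target.} You say the cubic term is absent at $\alpha=2$, but $3-2\alpha=-1$ there, so the target does contain a cubic term for \texttt{ITP}. Indeed, $\Delta=\mathrm{Var}(\hat r)/(\lambda\E[w])$ at $\alpha=2$ has a nonzero $\kappa^3$ part.

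Suppose you really collect everything to order $\kappa^3$ using the exact moments $\E[\hat r^k]=k!/\prod_{i=1}^k(i+1/\kappa)$. Then $\mathrm{Var}(\hat r)=\kappa^2-4\kappa^3+O(\kappa^4)$ and $\mathrm{Cov}(\hat r,\hat r^2)=4\kappa^3+O(\kappa^4)$, and you get
\[
\Delta=\frac{\kappa^2}{\lambda}+\frac{(3-2\alpha)\kappa^3}{\lambda^2}-\frac{4\kappa^3}{\lambda}+O(\kappa^4),\qquad
\frac{\|w_\alpha\|_{L^2}}{\|w_\alpha\|_{L^1}}=1+\frac{\kappa^2}{2\lambda^2}+\frac{(3-2\alpha)\kappa^3}{\lambda^3}-\frac{2\kappa^3}{\lambda^2}+O(\kappa^4).
\]
In the gain, the coefficient $(3-2\alpha)/\lambda^2$ is $2(2-\alpha)/\lambda^2$ (curvature times $\mathrm{Cov}(\hat r,\hat r^2)$) minus $1/\lambda^2$ (normalization by $\E[w]$). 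It is not a third-central-moment effect.

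The extra $-4\kappa^3/\lambda$ disappears only under the paper's leading-order substitution $\E[\hat r^k]\approx k!\,\kappa^k$. The $\alpha$-dependent cubic distortion term is simply dropped in the statement. So the light-tail display is a mixed-order truncation: gain to $\kappa^3$, distortion to $\kappa^2$, $\alpha$-independent cubic terms discarded. Your write-up must say which truncation it uses rather than claim a consistent expansion to order $\kappa^3$.
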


The expansions in Proposition~\ref{prop:trend-a-kappa-lambda} rigorously justify the adaptive mechanism of \texttt{BoT} by analyzing the gradient of the regret proxy $B$ with respect to $\alpha$.
We again split the discussion into the heavy- and light-tailed regimes.
In the heavy-tailed regime, the distortion penalty (the coefficient of $\epsilon$ in Eq.~\eqref{eq:B-large-kappa-clean}) is proportional to $(\lambda + \alpha - 1)^{-2}$. 
Differentiating this term with respect to $\alpha$ yields a strictly negative gradient proportional to $-(\lambda + \alpha - 1)^{-3}$. Since the distortion penalty dominates the regret bound (amplified by $\epsilon$), minimizing $B$ requires maximizing $\alpha$. Thus, the optimal strategy converges to the upper boundary $\alpha \to 2$. This confirms that heavy tails mathematically necessitate the bounded influence of $\chi^2$-regularization (\texttt{ITP}) to prevent reward hacking.

Conversely, in the light-tailed regime, distortion is bounded, prioritizing alignment gain. 
Differentiating Eq.~\eqref{eq:B-small-kappa-clean} with respect to $\alpha$ yields a positive gradient ($\frac{2\kappa^3}{\lambda^2} > 0$), implying the regret bound is minimized at $\alpha \to 1$. 
This justifies \texttt{sBoN} in light-tailed settings: minimizing regret requires aggressive re-weighting to maximize the probability of discovering rare high-reward responses.
With the theoretical monotonicity of the optimal $\alpha$  with respect to $\kappa$ established, we now turn to practical implementation.

\begin{algorithm}[tb]
   \caption{Best-of-Tails (\texttt{BoT}) Inference}
   \label{alg:bot}
\begin{algorithmic}[1]
   \STATE {\bfseries Input:} Prompt $x$, Reference policy $\pi_\textsf{ref}$, Reward model $\hat{r}$ (normalized to $[0,1]$ by dividing $R_\textsf{max}$).
   \STATE {\bfseries Hyper-parameters:} Sample count $N$, Hill estimation top-$K$, Pivot $\kappa_0$, Steering temperature $\lambda$.
   
   \STATE \textit{// Phase 1: Sampling \& Scoring}
   \STATE Sample candidate set $\calY_N = \{y_1, \dots, y_N\} \sim \pi_\textsf{ref}(\cdot|x)$.
   \STATE Compute rewards $r_i \leftarrow \hat{r}(x, y_i)$ for $i = 1, \dots, N$.
   
   \STATE \textit{// Phase 2: Tail Estimation (Hill Estimator)}
   \STATE Sort rewards descending: $r_{(1)} \ge r_{(2)} \ge \dots \ge r_{(N)}$.
   \STATE Estimate tail index $\hat{\kappa}$ using the top $K$ order statistics:
   \STATE \quad $\hat{\kappa} \leftarrow \frac{1}{K} \sum_{j=1}^K \log \left( \frac{1 - r_{(K+1)}}{1 - r_{(j)}} \right)$. \hfill {\small (Eq.~\eqref{eq:hill-estimator})}
   
   \STATE \textit{// Phase 3: Adaptive Interpolation}
   \STATE Compute adaptive parameter $\alpha$:
   \STATE \quad $\alpha \leftarrow 1 + \frac{\hat{\kappa}}{\hat{\kappa} + \kappa_0}$. \hfill {\small (Eq.~\eqref{eq:hill-to-alpha})}
   
   \STATE \textit{// Phase 4: Re-weighting \& Selection}
   \FOR{$i=1$ {\bfseries to} $N$}
       \STATE Calculate unnormalized weight using $\alpha$-exponential:
       \STATE $w_i \leftarrow \exp_\alpha(r_i/\lambda) = \left[ 1 + (\alpha-1)\frac{r_i}{\lambda} \right]_+^{\frac{1}{\alpha-1}}$. {\small (Prop.~\ref{prop:alpha-rlhf})}
   \ENDFOR
   \STATE Normalize weights: $p_i \leftarrow w_i / \sum_{j} w_j$.
   \STATE {\bfseries Return} Sample $y^\star$ from $\calY_N$ with probabilities $\{p_i\}$.
\end{algorithmic}
\end{algorithm}

\begin{figure*}[t!]
  \vskip 0.2in
  \begin{center}
    \centerline{\includegraphics[width=\textwidth]{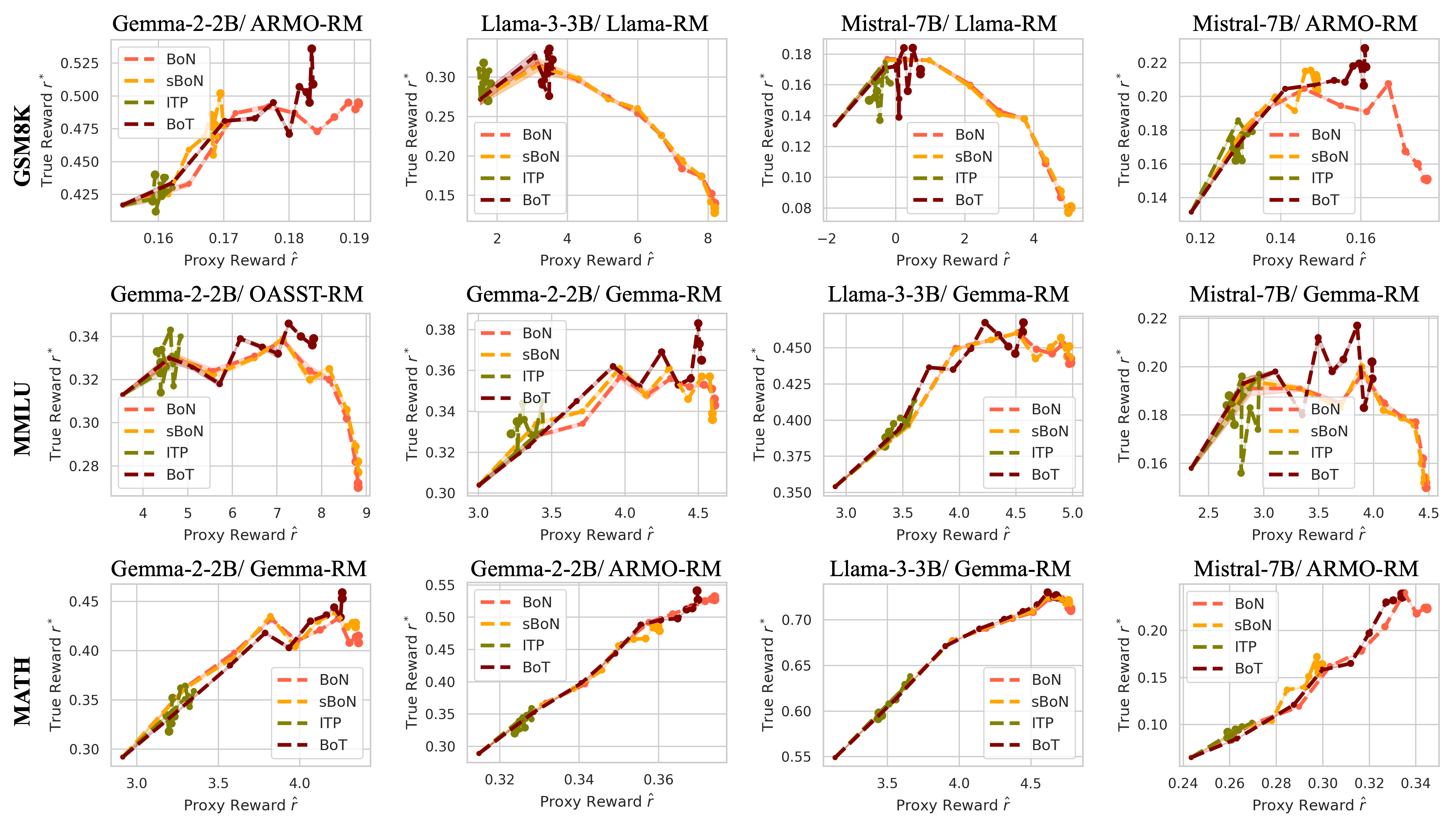}}
    \caption{
    Optimistic (\texttt{sBoN}/\texttt{BoN}), pessimistic (\texttt{ITP}), and adaptive (\texttt{BoT}) strategies on GSM8K, MMLU, and MATH (\textbf{rows}) across different reference models and proxy rewards (\textbf{columns}). 
    Each curve traces the trajectory of True Reward ($r^*$) vs. Proxy Reward ($\hat{r}$) as the sample size $N$ increases from $2^0$ to $2^{10}$, with marker size proportional to $N$. The smallest marker represents $N=1$ (standard sampling), while the largest represents $N=1024$.
    Optimistic strategies typically over-optimize the proxy reward as $N$ grows, leading to reward hacking (where $r^*$ degrades despite increasing $\hat{r}$). 
    Conversely, \texttt{ITP} tends to saturate early, failing to leverage larger $N$ for further gains. 
    The proposed \texttt{BoT} successfully navigates this trade-off, achieving higher true and proxy rewards without succumbing to reward hacking. 
    }
    \label{fig:main-exp}
  \end{center}
\end{figure*}

\subsection{Practical Implementation}\label{sec:bot-implement}
To estimate the tail index $\kappa(x)$, one approach is to model the \emph{full} per-prompt reward distribution (e.g., via Gaussian kernels in \citet{raman2025abon}). 
However, directly estimating tail statistics has significantly lower sample complexity than estimating the full distribution (see Appendix~\ref{app:sec-tail-vs-full}). 
Therefore, our approach is distinct: rather than modeling the entire density, we explicitly target the \emph{tail}.

In practice, the prompt-specific tail index $\kappa(x)$ is latent and must be estimated empirically from the proxy rewards of the sampled candidates.
To this end, we employ the standard Hill estimator \citep{hill1975simple}.
Given $N$ sampled responses with rewards sorted as $r_{(1)} \ge r_{(2)} \ge \dots \ge r_{(N)}$, we estimate $\hat{\kappa}(x)$ using the top $K$ statistics (where $K < N$):
\begin{equation}\label{eq:hill-estimator}
\hat{\kappa}(x) = \frac{1}{K} \sum_{i=1}^K \log \left( \frac{1 - r_{(K+1)}}{1 - r_{(i)}} \right).
\end{equation}
See Appendix~\ref{app:sec-hill} for Hill estimator details and Appendix~\ref{app:sec-tail-vs-full} for its sample complexity.
Since Proposition~\ref{prop:trend-a-kappa-lambda} establishes the monotonicity of the optimal $\alpha$, we propose a simple mapping to adaptively set $\alpha(x)$:
\begin{equation}\label{eq:hill-to-alpha}
\alpha(x) = 1 + \frac{\hat{\kappa}(x)}{\hat{\kappa}(x) + \kappa_0},
\end{equation}
where $\kappa_0$ is a hyperparameter serving as a pivot. This mapping smoothly interpolates between regimes: if the tail is light ($\hat{\kappa}(x) \ll \kappa_0$), $\alpha(x) \to 1$ (\texttt{sBoN}); otherwise, $\alpha(x) \to 2$ (\texttt{ITP}).
We summarize the complete \texttt{BoT} inference procedure build in this section in Algorithm~\ref{alg:bot}.

\section{Empirical Study}\label{sec:experiments}
We complement our theoretical results with a suite of experiments that investigate the
practicality of \texttt{BoT}, and compare against the re-weighting alignment algorithms, i.e., \texttt{BoN}, \texttt{sBoN}, and \texttt{ITP} introduced in \S~\ref{sec:related-work}. 

\textbf{Datasets and evaluation.}
We consider two experimental setups. In the first, we focus on benchmark tasks where the true reward $r^*$ is verifiable via rule-based answer checking. Specifically, we evaluate on three datasets of increasing difficulty: (i) the test split of GSM8K \citep{cobbe2021training}, a grade-school math dataset with integer answers; (ii) the math and chemistry subsets of MMLU \citep{hendrycks2020measuring}, framed as multiple-choice questions; and (iii) the test split of MATH \citep{hendrycks2021measuring}, which contains advanced competition-style problems with answers expressed in \LaTeX{}. 
In the second setup, we study human-preference alignment using AlpacaFarm \citep{dubois2023alpacafarm}, where the true reward is provided by an LLM-as-a-judge using \texttt{Alpaca-RM} \citep{dubois2023alpacafarm}.

\textbf{Policies and reward models.}
For the first setup, we evaluate three reference policies $\pi_\textsf{ref}$ spanning different architectures and increasing model sizes: \texttt{Gemma-2-2B} \citep{team2024gemma}, \texttt{Llama-3-3B} \citep{grattafiori2024llama}, and \texttt{Mistral-7B} \citep{jiang2023mistral}. 
We prompt all $\pi_\textsf{ref}$ using zero-shot CoT \citep{wei2022chain}, i.e., no in-context demonstrations. 
For the second setup, we use a \texttt{Pythia-1.4B} fine-tuned on AlpacaFarm as $\pi_\textsf{ref}$ \citep{biderman2023pythia}. 
For both setups, we consider four proxy reward models $\hat{r}$: \texttt{OASST-RM} (1.4B) \citep{kopf2023openassistant}, \texttt{Gemma-RM} (2B) \citep{dong2023raft}, \texttt{Llama-RM} (3B) \citep{yang2024regularizing}, and \texttt{ARMO-RM} (8B) \citep{wang2024interpretable}.

For each dataset-policy-reward combination, we generate a candidate pool of $2^{12}$ responses per prompt and conduct experiments with varying sample sizes $N$ by bootstrapping subsets from this pool, reporting results averaged over 10 independent trials.
We set $K=\max(1, \sqrt{N})$ in the Hill estimator (Eq.~\eqref{eq:hill-estimator}) and define $\kappa_0$ as the median of $\hat{\kappa}(x)$ across prompts.
See Appendix~\ref{app:sec-exp-expsetup} for details.

\textbf{Verifier as true reward.}
Figure~\ref{fig:main-exp} compares the performance of \texttt{BoN}, \texttt{sBoN}, \texttt{ITP}, and \texttt{BoT} by tracing the trajectory of the true reward $r^*$ against the proxy reward $\hat{r}$ as the sample size $N$ increases. 
The trajectory begins at $N=1$ (smallest marker) and extends to $N=1024$ (largest marker).
Initially, increasing $N$ improves the true reward (accuracy) across all methods, demonstrating the value of inference-time alignment in selecting the ``best'' responses from $\calY_N$.
However, as $N$ grows further, optimistic strategies like \texttt{BoN} and \texttt{sBoN}—due to their aggressive re-weighting of high-proxy responses—succumb to reward hacking. This is evidenced by the characteristic reversion in the curve, where $r^*$ degrades despite continued increases in $\hat{r}$.
Conversely, the pessimistic \texttt{ITP} remains highly conservative; while robust, it effectively saturates early, stopping the improvement of $\hat{r}$ and $r^*$ once $N$ exceeds a certain threshold. Consequently, despite preventing reward hacking, \texttt{ITP} may miss out on significant potential gains in regimes where the proxy is reliable.
In contrast, the proposed adaptive \texttt{BoT} effectively navigates this trade-off, reaching higher peak true rewards than \texttt{ITP} while maintaining the robustness required to avoid the reward hacking pitfalls of optimistic baselines.

\begin{figure}[!tb]
  \begin{center}
    \centerline{\includegraphics[width=.7\columnwidth]{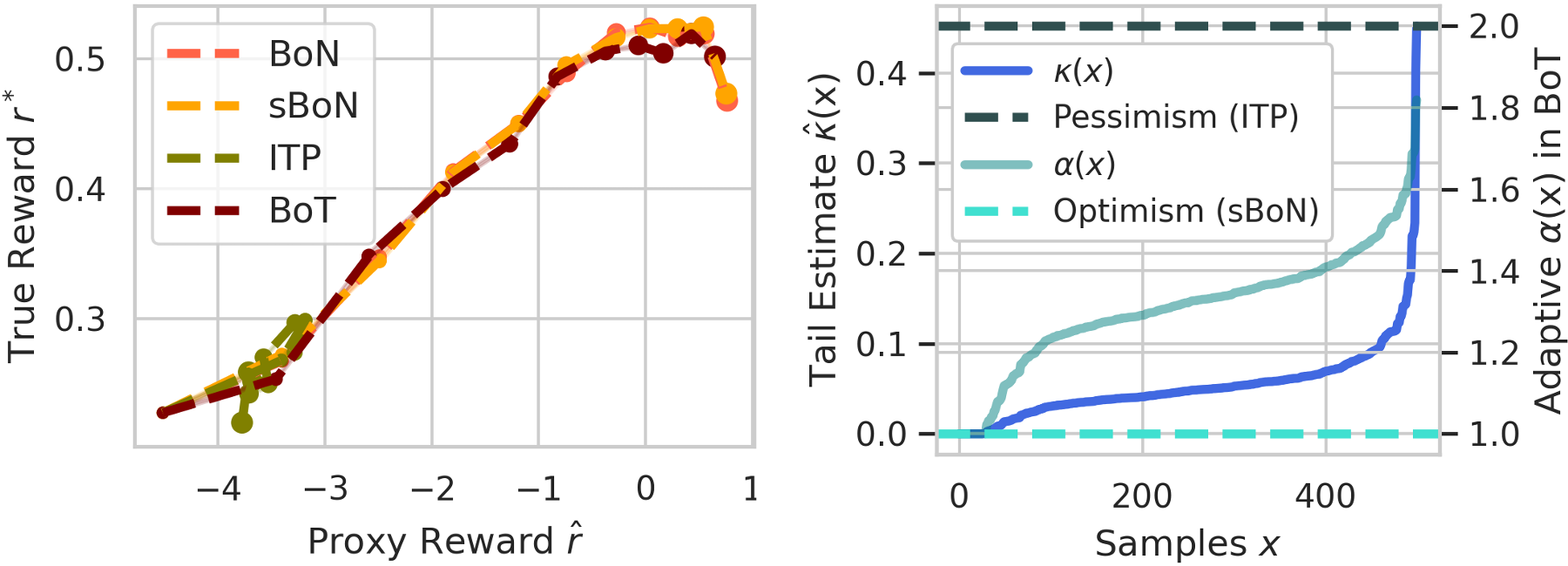}}
    \caption{
      \textbf{Left:} Performance comparison of optimistic (\texttt{sBoN}/\texttt{BoN}), pessimistic (\texttt{ITP}), and adaptive (\texttt{BoT}) strategies on AlpacaFarm, using \texttt{Llama-RM} as the proxy and \texttt{Alpaca-RM} as the true reward.
    \textbf{Right:} The prompt-specific Hill estimates $\hat{\kappa}(x)$ and the corresponding adaptive parameter $\alpha(x)$. By design, \texttt{BoT} minimizes inference regret by shifting $\alpha \to 2$ under heavy tails (large $\hat{\kappa}$) to enforce robustness, while allowing $\alpha \to 1$ under light tails (small $\hat{\kappa}$) to enable aggressive exploration.
    }
    \label{fig:alpacafarm-exp}
  \end{center}
\end{figure}

\textbf{LLM as true reward.}
We observe similar trade-offs on AlpacaFarm, shown in Figure~\ref{fig:alpacafarm-exp} (left). Note that in this setting, the true reward is the evaluation score from \texttt{Alpaca-RM} rather than exact match accuracy.
Consistent with the verifier results, alignment improves performance initially for all methods. However, \texttt{sBoN} and \texttt{BoN} eventually drift into the reward hacking regime, while \texttt{ITP} exhibits ``early'' robustness but premature saturation.
In contrast, \texttt{BoT} successfully sustains high true rewards without degradation in the large-$N$ regime.
To explain this adaptability, Figure~\ref{fig:alpacafarm-exp} (right) visualizes the Hill estimates $\hat{\kappa}(x)$ (Eq.~\eqref{eq:hill-estimator}) for 1024 responses across 500 different prompts, along with the corresponding adaptive $\alpha(x)$ (with $\kappa_0=0.1$ in Eq.~\eqref{eq:hill-to-alpha}).
The plot reveals significant diversity in the reward distributions, ranging from light tails (small $\hat{\kappa}(x)$) to heavy tails (large $\hat{\kappa}(x)$).
This heterogeneity confirms that fixing a static strategy—whether optimistic \texttt{sBoN} ($\alpha=1.0$) or pessimistic \texttt{ITP} ($\alpha=2.0$)—is suboptimal, as it limits the flexibility of inference-time alignment to exploit high-reward responses when safe, or to constrain them when risky.
 
We defer additional results to Appendix~\ref{app:sec:additional-and-ablation}, including full comparisons (Appendix~\ref{app:sec:all-exp}), sample scaling curves ($\{r^*, \hat{r}\}$ vs. $N$) and gain-distortion analysis (Appendix~\ref{app:sec:additional-exp}), and ablation studies on $\lambda$ and $\kappa_0$ (Appendix~\ref{app:sec:ablation}).

\section{Final Remark}\label{sec:conclusion}
Here we reflect on the limitations and highlight interesting avenues for future work.

\textbf{Limitations.}
\texttt{BoT}'s effectiveness relies on accurate tail estimation. 
While asymptotically consistent, the Hill estimator can exhibit high variance in small-sample regimes ($N < 50$), potentially causing noisy $\alpha(x)$ selection. 
Furthermore, our mapping function $\alpha(x)$ is a heuristic interpolation; although it respects the theoretical monotonicity of the optimal strategy, it may not perfectly capture the ideal trajectory for every reward landscape. 
Additionally, \texttt{BoT} introduces hyper-parameters (pivot $\kappa_0$, cutoff $K$) that require validation. 
Finally, a multi-modal reward distribution can confound tail estimation, leading to mis-calibration in \texttt{BoT}.

\textbf{Future directions.}
We envision several extensions of the \texttt{BoT} framework. 
First, enhancing reward modeling via ensembles \citep{coste2023reward} to mitigate heavy-tailed errors at the source. 
To reduce latency, tail estimation could be amortized using a lightweight predictordirectly from prompt embeddings, or used to dynamically stops generating samples for light-tailed prompts while allocating larger $N$ to heavy-tailed ones to maximize efficiency \citep{kalayci2025optimal}. 
Finally, the \texttt{BoT} policy could be distilled into a dense model, embedding tail-adaptive robustness directly into the weights to eliminate sampling overhead.

\clearpage
\paragraph{Impact Statement.}
This paper presents work whose goal is to advance the field of machine learning. There are many potential societal consequences of our work, none of which we feel must be specifically highlighted here.

\paragraph{Disclaimer.}
This paper was prepared for informational purposes by the Global Technology Applied Research group of JPMorgan Chase \& Co. 
This paper is not a product of the Research Department of JPMorgan Chase \& Co. or its affiliates. 
Neither JPMorgan Chase \& Co. nor any of its affiliates makes any explicit or implied representation or warranty and none of them accept any liability in connection with this paper, including, without limitation, with respect to the completeness, accuracy, or reliability of the information contained herein and the potential legal, compliance, tax, or accounting effects thereof. 
This document is not intended as investment research or investment advice, or as a recommendation, offer, or solicitation for the purchase or sale of any security, financial instrument, financial product or service, or to be used in any way for evaluating the merits of participating in any transaction.

\bibliography{reference}
\bibliographystyle{icml2026}

\clearpage
\appendix
\onecolumn
\setcounter{equation}{0}

The appendix is divided into the following parts. 
Appendix~\ref{app:sec-proof}: Omitted proofs and theoretical results; Appendix~\ref{app:sec-additional}: additional discussions and details on the experimental setup; and Appendix~\ref{app:sec:additional-and-ablation}: Additional empirical results and ablation studies.

\section{Omitted Proofs and Theoretical Results}\label{app:sec-proof}
\def\theequation{A.\arabic{equation}}
\def\thetable{A.\arabic{table}}
\def\thefigure{A.\arabic{figure}}
\def\thelem{A.\arabic{lem}}
\def\thedefn{A.\arabic{defn}}
\def\theprop{A.\arabic{prop}}
We first introduce (and prove) the following useful lemmas that are related in the main text and that facilitate the proofs of the propositions.

\begin{lem}\label{app:lem:alpha-exp-log}
    $\alpha$-exponential and $\alpha$-logarithm are generalizations of the exponential and logarithm functions since $\lim_{\alpha\to 1}\exp_\alpha(z)=\exp(z)$ and $\lim_{\alpha\to 1} \log_\alpha(z)=\log(z)$.
\end{lem}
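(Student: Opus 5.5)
The plan is to treat the two limits separately and reduce each to an elementary limit, using the definitions $\exp_\alpha(z)=[1+(\alpha-1)z]_+^{1/(\alpha-1)}$ and $\log_\alpha(z)=\frac{z^{\alpha-1}-1}{\alpha-1}$. Throughout I would write $t=\alpha-1$, so that $\alpha\to1$ becomes $t\to0$. Since the paper uses $\alpha>1$, I would take $t\to0^+$, but the argument also works two-sidedly.

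For the logarithm, fix $z>0$ and write $z^{t}=e^{t\log z}$. Then
\[
\log_\alpha(z)=\frac{e^{t\log z}-1}{t}.
\]
This is a difference quotient of the function $t\mapsto e^{t\log z}$ at $t=0$, so it converges to that function's derivative at $0$, namely $\log z$. Equivalently, one can apply L'H\^opital's rule, or use the expansion $e^{s}=1+s+O(s^2)$ with $s=t\log z$.

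For the exponential, fix $z\in\mathbb{R}$. Once $t$ is small enough that $t|z|<1$, we have $1+tz>0$, so the positive part $[\cdot]_+$ is inactive. I expect this to be the only subtle point, and the step I would handle first. It gives
\[
\exp_\alpha(z)=\exp\Bigl(\tfrac{1}{t}\log(1+tz)\Bigr).
\]
Next, $\frac{1}{t}\log(1+tz)\to z$, either as the difference quotient of $t\mapsto\log(1+tz)$ at $0$, or via $\log(1+u)=u+O(u^2)$ with $u=tz$. Continuity of $\exp$ then yields $\exp_\alpha(z)\to e^{z}$.

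There is no real obstacle beyond these points: the positive part must be dropped only for $t$ near $0$, and $z>0$ is needed for $\log$ to be defined. The limits are pointwise in $z$. If local uniformity were ever needed later, for example to pass from $\pi_{\textsf{BoT}}$ to $\pi_{\textsf{sBoN}}$, the $O(u^2)$ remainders are uniform on compact sets of $z$, so it follows from the same expansions.
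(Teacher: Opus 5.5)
Your proposal is correct and follows essentially the same route as the paper: the exponential limit reduces to the standard limit $(1+tz)^{1/t}\to e^{z}$ as $t\to 0$, and the logarithm limit reduces to the difference quotient $(e^{\beta\log z}-1)/\beta\to\log z$, which the paper phrases via L'H\^{o}pital's rule. Your points that the positive part $[\cdot]_+$ is inactive once $t|z|<1$, and that $z>0$ is needed for $\log_\alpha$, spell out details the paper's proof leaves implicit (the paper also writes $t\to 1$ where it means $t\to 0$).
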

\begin{proof}
Since $\lim\limits_{t\to 1} (1+tz)^{1/t} = \exp(z)$, we have $\lim\limits_{\alpha\to 1}\exp_\alpha(z) = \lim\limits_{\alpha\to 1} (1+(\alpha-1)z)^{1/(\alpha-1)} = \exp(z)$. 
Similarly for $\alpha$-logarithm, by the L'H\^{o}pital's rule,
\begin{equation}
\begin{aligned}
    \lim_{\alpha\to 1} \log_\alpha(t) = \lim_{\alpha\to 1} \frac{t^{\alpha-1}-1}{\alpha-1} = \lim_{\beta\to 0} \frac{t^{\beta}-1}{\beta} = \lim_{\beta\to 0} \frac{e^{\beta \log(t)}-1}{\beta} = \log(t) \lim_{\beta\to 0} \frac{e^{\beta \log(t)}-1}{\beta \log(t)} = \log(t).
\end{aligned}    
\end{equation}
\end{proof}

\begin{lem}[H\"older's inequality \citep{ledoux2001concentration}]\label{app:lem:holder}
    Let $p, q \in [1, \infty]$ with $\frac{1}{p}+\frac{1}{q}=1$, then for all real-valued functions $f$ and $g$ measurable with respect to measure $\mu$, $\|fg\|_{L^1(\mu)} \leq \|f\|_{L^p(\mu)}\|g\|_{L^q(\mu)}$. When $p = q = 2$, the H\"older's inequality degenerates to the Cauchy–Schwarz inequality.
\end{lem}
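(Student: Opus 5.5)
We prove the inequality by the standard normalization argument built on Young's inequality. We treat the degenerate cases first. If $\|f\|_{L^p(\mu)}=0$, then $f=0$ $\mu$-a.e., so $fg=0$ a.e. and both sides vanish. The same holds if $\|g\|_{L^q(\mu)}=0$. If either norm is $+\infty$ and the other is nonzero, the right-hand side is $+\infty$ and there is nothing to prove. The endpoint case $p=1$, $q=\infty$ (and symmetrically $p=\infty$, $q=1$) is immediate: $|f(z)g(z)|\le |f(z)|\,\|g\|_{L^\infty(\mu)}$ for $\mu$-a.e.\ $z$, and integrating gives the claim. So we may assume $1<p,q<\infty$ and $0<\|f\|_{L^p(\mu)},\|g\|_{L^q(\mu)}<\infty$.

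The key ingredient is Young's inequality: for $a,b\ge 0$ and conjugate exponents $1<p,q<\infty$,
\begin{equation*}
ab\le \frac{a^p}{p}+\frac{b^q}{q}.
\end{equation*}
We derive it from concavity of $\log$. If $a=0$ or $b=0$ it is trivial. Otherwise, since $\frac1p+\frac1q=1$, Jensen's inequality for the concave function $\log$ gives
\begin{equation*}
\log\Big(\frac{a^p}{p}+\frac{b^q}{q}\Big)\ \ge\ \frac1p\log a^p+\frac1q\log b^q=\log(ab),
\end{equation*}
and exponentiating yields Young's inequality.

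Next we normalize by setting $F=|f|/\|f\|_{L^p(\mu)}$ and $G=|g|/\|g\|_{L^q(\mu)}$, so that $\int F^p\,d\mu=\int G^q\,d\mu=1$. Applying Young's inequality pointwise with $a=F(z)$, $b=G(z)$ and integrating against $\mu$ gives
\begin{equation*}
\int FG\,d\mu\le \frac1p+\frac1q=1.
\end{equation*}
Undoing the normalization gives $\|fg\|_{L^1(\mu)}\le \|f\|_{L^p(\mu)}\|g\|_{L^q(\mu)}$. For $p=q=2$ the statement reads $\int|fg|\,d\mu\le \|f\|_{L^2(\mu)}\|g\|_{L^2(\mu)}$, which is exactly the Cauchy--Schwarz inequality.

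The only step requiring any care is the bookkeeping of the degenerate cases (zero or infinite norms, and the $L^\infty$ endpoint), since the normalization divides by the norms. The analytic content rests entirely on Young's inequality, which follows in one line from concavity of the logarithm.
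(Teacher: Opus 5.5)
Your proof is correct. The degenerate cases are handled properly: zero or infinite norms, and the $L^\infty$ endpoint. Young's inequality follows correctly from concavity of $\log$ with weights $1/p$ and $1/q$, and the normalization step closes the argument.

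The paper gives no proof of this lemma. It only cites \citet{ledoux2001concentration} and then invokes the result in the proof of Proposition~\ref{prop:general-regret-upper-bound}, so your argument is the standard textbook proof that the citation stands in for.

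The one point to make explicit is the convention $0\cdot\infty=0$. Your claim that ``both sides vanish'' depends on it when one norm is zero and the other is infinite.
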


\subsection{Proof of Proposition~\ref{prop:general-regret-upper-bound}}\label{app:proof:prop:general-regret-upper-bound}
Following the definition of the inference-time regret, we decompose it using $\pi_\textsf{ref}$ as an anchor policy,
\begin{equation}
\begin{aligned}
    \regret(\hat{\pi}_w; x) &= J_{r^*}(\pi^*;x) - J_{r^*}(\hat{\pi}_w;x)\\
    &= \underbrace{J_{r^*}(\pi^*;x) - J_{r^*}(\pi_\textsf{ref};x)}_{(i)} - \underbrace{\left(J_{r^*}(\hat{\pi}_w;x) - J_{r^*}(\pi_\textsf{ref};x)\right)}_{(ii)}.\\
\end{aligned} 
\end{equation}

Since $r^*$ and $\pi^*$ are fixed, hence once $\pi_\textsf{ref}$ is selected, term (i) becomes a baseline sub-optimality constant that  characterize how far the reference is from optimal under the true reward, and is independent of the inference-time alignment mechanism.
Using the definition of the expected density ratio, term (i) can be upper bounded by
\begin{equation}
\begin{aligned}
    J_{r^*}(\pi^*;x) - J_{r^*}(\pi_\textsf{ref};x) &= \left(J_{\hat{r}}(\pi^*;x)-J_{\hat{r}}(\pi_\textsf{ref};x)\right) - \left(J_{\hat{r}-r^*}(\pi^*;x)-J_{\hat{r}-r^*}(\pi_\textsf{ref};x)\right).
\end{aligned}
\end{equation}
By Lemma~\ref{app:lem:holder}, we have 
\begin{equation}
\begin{aligned}
    \left|J_{\hat{r}-r^*}(\pi^*;x)-J_{\hat{r}-r^*}(\pi_\textsf{ref};x)\right|&= \left|\E_{y\sim\pi^*(\cdot|x)}\left[\hat{r}-r^*\right] - \E_{y\sim\pi_\textsf{ref}(\cdot|x)}\left[\hat{r}-r^*\right]\right|\\
    &= \left|\E_{y\sim\pi_\textsf{ref}(\cdot|x)}\left[\left(\frac{\pi^*}{\pi_\textsf{ref}}-1\right)\left(\hat{r}-r^*\right)\right]\right|\\
    &\leq \sqrt{C^{\pi^*}-1} \times \epsilon_2(\hat{r}; \pi_\textsf{ref}, x).
\end{aligned}
\end{equation}
Similarly, $|J_{\hat{r}}(\pi^*;x)-J_{\hat{r}}(\pi_\textsf{ref};x)| \leq \sqrt{C^{\pi^*}-1} \times \|\hat{r}\|_{L^2(\pi_\textsf{ref})}$.
Putting them together, term (i) can be upper bounded by
\begin{equation}\label{app:eq:1}
\begin{aligned}
    J_{r^*}(\pi^*;x) - J_{r^*}(\pi_\textsf{ref};x) &\leq \left|J_{\hat{r}}(\pi^*;x)-J_{\hat{r}}(\pi_\textsf{ref};x)\right| + \left|J_{\hat{r}-r^*}(\pi^*;x)-J_{\hat{r}-r^*}(\pi_\textsf{ref};x)\right|\\
    &\leq \sqrt{C^{\pi^*}-1}\left(\epsilon_2(\hat{r}; \pi_\textsf{ref}, x) + \|\hat{r}\|_{L^2(\pi_\textsf{ref})}\right)
\end{aligned}
\end{equation}

Next, for term (ii), it can be similarly decompose into 
\begin{equation}
\begin{aligned}
    J_{r^*}(\hat{\pi}_w;x) - J_{r^*}(\pi_\textsf{ref};x) &= \left(J_{\hat{r}}(\hat{\pi}_w;x) - J_{\hat{r}}(\pi_\textsf{ref};x)\right) + \left(J_{\hat{r}-r^*}(\hat{\pi}_w;x)-J_{\hat{r}-r^*}(\pi_\textsf{ref};x)\right)\\
    &\geq \left(J_{\hat{r}}(\hat{\pi}_w;x) - J_{\hat{r}}(\pi_\textsf{ref};x)\right) - \left|J_{\hat{r}-r^*}(\hat{\pi}_w;x)-J_{\hat{r}-r^*}(\pi_\textsf{ref};x)\right|.
\end{aligned}
\end{equation}
Since $\hat{r},r^* \in [0, R_{\textsf{max}}]$, we have
\begin{equation}\label{app:eq:2}
\begin{aligned}
    J_{\hat{r}}(\hat{\pi}_w;x) - J_{\hat{r}}(\pi_\textsf{ref};x) &= J_{\hat{r}}(\hat{\pi}_w;x) - J_{\hat{r}}(\pi_w;x) + J_{\hat{r}}(\pi_w;x) - J_{\hat{r}}(\pi_\textsf{ref};x)\\
    &= \Delta(\pi_w) + J_{\hat{r}}(\hat{\pi}_w;x) - J_{\hat{r}}(\pi_w;x)\\
    &\geq \Delta(\pi_w) -2R_{\textsf{max}}D_\textsf{TV}(\hat{\pi}_w\|\pi_w).
\end{aligned}
\end{equation}
Then, by Lemma~\ref{app:lem:holder} again, we have 
\begin{equation}\label{app:eq:3}
\begin{aligned}
    \left|J_{\hat{r}-r^*}(\hat{\pi}_w;x)-J_{\hat{r}-r^*}(\pi_\textsf{ref};x)\right| &\leq \left(1+(\alpha-1)D_\alpha(\pi_w\|\pi_\textsf{ref})\right)^{1/\alpha} \|\hat{r}-r^*\|_{L^\beta(\pi_\textsf{ref})}\\
    &=\sqrt{C^{\pi_w}}\times \epsilon_2(\hat{r}; \pi_\textsf{ref}, x)\; \text{if selecting $\alpha=\beta=2$}.
\end{aligned}
\end{equation}
Combining Eq.~\eqref{app:eq:1}, Eq.~\eqref{app:eq:2} and Eq.~\eqref{app:eq:3}, we have
\begin{equation}
\begin{aligned}
      \regret(\hat{\pi}_w; x) &\leq \sqrt{C^{\pi^*}-1}\left(\epsilon_2(\hat{r}; \pi_\textsf{ref}, x) + \|\hat{r}\|_{L^2(\pi_\textsf{ref})}\right) - \Delta(\pi_w)  + 2R_{\textsf{max}}D_\textsf{TV}(\hat{\pi}_w\|\pi_w)\\
      &\quad+\left(1+(\alpha-1)D_\alpha(\pi_w\|\pi_\textsf{ref})\right)^{1/\alpha} \|\hat{r}-r^*\|_{L^\beta(\pi_\textsf{ref})}.
\end{aligned}
\end{equation}
Taking $\alpha=\beta=2$ leads to the desired result.

\clearpage
\subsection{Proof of Proposition~\ref{prop:bon-itp-tail}}
Fix a prompt $x$ and suppress the conditioning on it. 
Let the proxy reward be $\hat r := \hat r(x,Y) \in [0,1]$ with $Y \sim \pi_\textsf{ref}$. 
We define the endpoint gap $U := 1 - \hat r$. 
By assumption:
\begin{equation}\label{eq:U-beta-proof}
U \sim \text{Beta}\left(\frac{1}{\kappa}, 1\right), \quad \text{with density } f_U(u) = \frac{1}{\kappa} u^{\frac{1}{\kappa}-1} \text{ for } u \in [0,1].
\end{equation}
We utilize the general regret bound decomposition derived previously, where the regret $B(\pi)$ depends on the \textbf{Peak} (relative supremum), \textbf{Dist} (distortion), and \textbf{Gain} ($\Delta$) of the weighting function $w(\hat r)$:
\begin{equation}\label{eq:master-bound}
B(\pi) \leq 2\exp\left(-\frac{N}{\text{Peak}(w)}\right) + \epsilon \cdot \text{Dist}(w) - \Delta(w).
\end{equation}
The statistics are defined as:
\begin{equation}
\text{Peak}(w) = \frac{\sup_{\hat r} w(\hat r)}{\E[w(\hat r)]},\;\text{Dist}(w) = \frac{\sqrt{\E[w(\hat r)^2]}}{\E[w(\hat r)]},\; \Delta(w) = \Delta(\pi_w) = \E_{\pi_w}[\hat r] - \E[\hat r].
\end{equation}

\paragraph{Preliminaries: Moments.}
For $U \sim \text{Beta}(1/\kappa, 1)$, the raw moments are given by:
\begin{equation}\E[U^m] = \int_0^1 u^m \frac{1}{\kappa} u^{\frac{1}{\kappa}-1} du = \frac{1}{m\kappa + 1}.
\end{equation}
This implies that in the heavy-tail regime ($\kappa \to \infty$), $U$ remains distributed over $[0,1]$ but with mean $\approx 1/\kappa$. In the light-tail regime ($\kappa \to 0$), $\hat r = 1-U$ concentrates near 0 with moments:
\begin{equation}\E[\hat r] = 1 - \E[U] = \frac{\kappa}{\kappa+1}, \quad \E[\hat r^2] = 1 - \frac{2}{\kappa+1} + \frac{1}{2\kappa+1}.
\end{equation}
As $\kappa\to0$, for each fixed integer $m\geq 1$, $\E[\hat r^m]=m!\kappa^m + O(\kappa^{m+1})$.

\paragraph{Part 1: Heavy-tailed regime (large $\kappa$).}
Here, we analyze the statistics using $U$ as the primary variable.

\paragraph{1.1 Soft-BoN.} 
The weight is $w_\textsf{sBoN}(\hat r) = e^{\hat r/\lambda} = e^{(1-U)/\lambda} \propto e^{-U/\lambda}$. Let $\bar{w}(U) = e^{-U/\lambda}$.
\begin{itemize}
    \item \textbf{Peak:} 
        We expand the expectation of the weight using Taylor series:
        \begin{equation}\E[e^{-U/\lambda}] = 1 - \frac{1}{\lambda}\E[U] + O(\E[U^2]) = 1 - \frac{1}{\lambda(\kappa+1)} + O(\kappa^{-2}).
        \end{equation}
        The Peak is the ratio of the supremum (at $U=0$) to the expectation:
        \begin{equation}
        \text{Peak} = \frac{1}{\E[e^{-U/\lambda}]} = \frac{1}{1 - \frac{1}{\lambda\kappa} + o(\kappa^{-1})} = 1 + \frac{1}{\lambda\kappa} + o(\kappa^{-1}).
        \end{equation}
        Substituting this into the exponential term in Eq.~\eqref{eq:master-bound}:
        \begin{equation}
        \exp\left(-\frac{N}{\text{Peak}}\right) \approx \exp\left(-\frac{N}{1 + 1/(\lambda\kappa)}\right).
        \end{equation}
    \item \textbf{Dist.:}
        We compute the second moment $\E[w^2]$ using $\bar{w}^2 = e^{-2U/\lambda}$:
        \begin{equation}
        \E[e^{-2U/\lambda}] = 1 - \frac{2}{\lambda\kappa} + \frac{2}{\lambda^2(2\kappa)} + o(\kappa^{-1}) = 1 - \frac{2}{\lambda\kappa} + \frac{1}{\lambda^2\kappa} + o(\kappa^{-1}).
        \end{equation}
        The squared distortion is:
        \begin{equation}\text{Dist}^2 = \frac{\E[e^{-2U/\lambda}]}{(\E[e^{-U/\lambda}])^2} \approx \frac{1 - \frac{2}{\lambda\kappa} + \frac{1}{\lambda^2\kappa}}{1 - \frac{2}{\lambda\kappa} + \frac{1}{\lambda^2\kappa}(\frac{1}{2})} \approx 1 + \frac{1}{2\lambda^2\kappa}.
        \end{equation}
        Thus, $\text{Dist} \approx 1 + \frac{1}{4\lambda^2\kappa}$.
        \item \textbf{Gain:}
        The proxy gain is $\Delta = \E[U] - \E_{\pi_w}[U]$. 
        We compute the tilted expectation $\E_{\pi_w}[U]$ using the substitution $t = \kappa U$ (approximating $U$ as Exponential$(1/\kappa)$):
        \begin{equation}
        \E[U e^{-U/\lambda}] \approx \int_0^\infty u e^{-u/\lambda} \frac{1}{\kappa} e^{-u/\kappa} du \approx \frac{1}{\kappa} \int_0^1 u e^{-u/\lambda} du.
        \end{equation}
        A precise integration yields $\E_{\pi_w}[U] \approx \frac{\lambda}{\kappa}(1 - e^{-1/\lambda})$. 
        Thus:
        \begin{equation}
        \Delta \approx \frac{1}{\kappa} - \frac{\lambda}{\kappa}(1 - e^{-1/\lambda}) = \frac{1}{\kappa}\left( 1 - \lambda(1 - e^{-1/\lambda}) \right) + o(\kappa^{-1}).
        \end{equation}
\end{itemize}

\paragraph{1.2 ITP.}
The weight is $w_\textsf{ITP}(\hat r) = 1 + \hat r/\lambda = 1 + (1-U)/\lambda$. 
Let $C = 1 + 1/\lambda$. 
Then $w_\textsf{i} \propto C - U/\lambda$.
\begin{itemize}
    \item \textbf{Peak:}
    The supremum is $C$. The expectation is $\E[w_\textsf{i}] = C - \E[U]/\lambda \approx C - \frac{1}{\lambda\kappa}$.
    \begin{equation}
    \text{Peak} = \frac{C}{C - \frac{1}{\lambda\kappa}} = \frac{1}{1 - \frac{1}{C\lambda\kappa}} \approx 1 + \frac{1}{C\lambda\kappa} = 1 + \frac{1}{(\lambda+1)\kappa} + o(\kappa^{-1}).
    \end{equation}
    \item \textbf{Dist and Gain:}
    Using standard variance approximations for linear weights:
    \begin{equation}
    \text{Dist} \approx 1 + \frac{1}{4(\lambda+1)^2\kappa} + o(\kappa^{-1}), \quad \Delta \approx \frac{1}{2\kappa(\lambda+1)} + o(\kappa^{-1}).
    \end{equation}
\end{itemize}

\paragraph{Part 2: Light-tail regime (small $\kappa$).}
Here $\hat r$ concentrates near 0. 
Both soft-BoN ($e^{\hat r/\lambda}$) and ITP ($1+\hat r/\lambda$) behave as $1+\hat r/\lambda$ to the first order, and therefore they have similar terms of Peak and Dist.
The difference of the bound lies in the Gain.
\begin{itemize}
    \item \textbf{Peak:} For $w(\hat r) = 1 + \hat r/\lambda$:\begin{equation}\sup w = 1 + 1/\lambda = \frac{\lambda+1}{\lambda}, \quad \E[w] = 1 + \frac{\E[\hat r]}{\lambda} \approx 1 + \frac{\kappa}{\lambda} = \frac{\lambda+\kappa}{\lambda}.\end{equation}The Peak is:\begin{equation}
    \text{Peak} = \frac{(\lambda+1)/\lambda}{(\lambda+\kappa)/\lambda} = \frac{\lambda+1}{\lambda+\kappa}.
    \end{equation}
    Substituting into the bound exponent:
    \begin{equation}
    \exp\left(-\frac{N}{\text{Peak}}\right) = \exp\left(-\frac{N(\lambda+\kappa)}{\lambda+1}\right).
    \end{equation}
    \item \textbf{Dist.:} Using $\text{Var}(w) \approx \text{Var}(\hat r)/\lambda^2 \approx \kappa^2/\lambda^2$ and $\E[w] \approx 1$:\begin{equation}\text{Dist} \approx 1 + \frac{\text{Var}(w)}{2\E[w]^2} \approx 1 + \frac{\kappa^2}{2\lambda^2}.\end{equation}    
\end{itemize}

\paragraph{2.1 Soft-BoN.} 
The \textbf{Gain} is given by $\E_{\pi_w}[\hat r] = \E[\hat r e^{\hat r/\lambda}] / \E[e^{\hat r/\lambda}]$. 
Expanding the numerator $\E[\hat r e^{\hat r/\lambda}]$ yields 
\begin{equation}
    \E[\hat r(1 + \hat r/\lambda + \frac{1}{2\lambda^2}\hat r^2 + O(\hat r^3))] = \E[\hat r] + \frac{1}{\lambda}\E[\hat r^2] + \frac{1}{2\lambda^2}\E[\hat r^3] + O(\E[\hat r^4]).
\end{equation}
By the moments of $\hat{r}$ with small $\kappa$, we obtain
\begin{equation}
    \E[\hat r e^{\hat r/\lambda}] = \kappa + \frac{2\kappa^2}{\lambda} + \frac{3\kappa^3}{\lambda^2} + O(\kappa^4),
\end{equation}
while the denominator expands as $\E[e^{\hat r/\lambda}] = 1 + \frac{\kappa}{\lambda} + \frac{\kappa^2}{\lambda^2} + O(\kappa^3)$. Performing a ratio expansion then yields $\E_{\pi_w}[\hat r] = \kappa + \frac{\kappa^2}{\lambda} + \frac{\kappa^3}{\lambda^2} + O(\kappa^4)$. Finally, subtracting the reference mean $\E[\hat r] = \kappa + O(\kappa^2)$ gives the proxy gain 
\begin{equation}
    \Delta = \frac{\kappa^2}{\lambda} + \frac{\kappa^3}{\lambda^2} + O(\kappa^4).
\end{equation}

\paragraph{2.2 ITP.} 
The \textbf{Gain} is dominated by the covariance term:\begin{equation}
\Delta = \frac{\text{Cov}(w, \hat r)}{\E[w]} \approx \frac{\text{Var}(\hat r)/\lambda}{1} \approx \frac{\kappa^2}{\lambda} +O(\kappa^3).
\end{equation}

\paragraph{Part 3: BoN limit ($\lambda\to0$).}
Best-of-$N$ (BoN) arises as the limit of Soft-BoN as the inverse temperature $1/\lambda \to \infty$. Since the supremum of the weights is $\sup_y w_\mathrm{soft}(\hat r(x,y)) = e^{1/\lambda}$ and the normalizing constant $\E[e^{\hat r/\lambda}]$ remains finite for any fixed $\kappa$, the peak ratio diverges as $\mathrm{Peak}(w_\mathrm{soft};x) = e^{1/\lambda} / \E[e^{\hat r/\lambda}] \to \infty$. Similarly, the distortion diverges as $\mathrm{Dist}(w_\mathrm{soft};x) = \sqrt{\E[e^{2\hat r/\lambda}]} / \E[e^{\hat r/\lambda}] \to \infty$. In the light-tail regime, the expansion $\E[e^{\hat r/\lambda}] = 1 + \kappa/\lambda + \kappa^2/\lambda^2 + O(\kappa^3)$ implies that the peak scales as $\mathrm{Peak}(w_\mathrm{soft};x) \asymp e^{1/\lambda}$. Consequently, minimizing the finite-$N$ proxy term $2\exp(-N/\mathrm{Peak}) \approx 2\exp(-N/e^{1/\lambda})$ requires the sample size $N$ to scale on the order of $e^{1/\lambda}$, confirming the qualitative behavior of BoN.

Combining Parts 1-3 completes the proof.

\clearpage
\subsection{Proof of Proposition~\ref{prop:alpha-rlhf}}
Consider the generator function $\phi_\alpha(t)$ of the $\alpha$-Tsallis divergence 
\begin{equation}
\begin{aligned}
    \phi_\alpha(t) = \frac{t^\alpha-\alpha t+(\alpha-1)}{\alpha(\alpha-1)},\;t>0,\alpha>1.
\end{aligned}
\end{equation}
Then, the $\alpha$-Tsallis divergence between two distributions $P$ and $Q$ can be alternatively expressed as $D_\alpha(P\|Q) = \sum\limits_{x\in\calX} Q(x) \phi_\alpha\left( \frac{P(x)}{Q(x)} \right)$.
Note that $\phi_\alpha(t)$ is convex with $\phi_\alpha(1)=0$, and therefore $D_\alpha$ is also convex in $P$.
Now, let the tilting function be $t(y|x) = \pi(y|x)/\pi_\textsf{ref}(y|x)$, the objective in Eq.~\eqref{eq:alpha-rlhf} becomes 
\begin{equation}\label{app:eq:convex-conjugate}
\begin{aligned}
     \calL_\alpha(\pi) 
     & = \E_{y\sim\pi(\cdot|x)}\left[ \hat{r}(x, y) \right] - \lambda D_\alpha\left(\pi(\cdot|x)\|\pi_\textsf{ref}(\cdot|x)\right)\\
     & = \sum\limits_{y\in\calY} \pi(y|x)\hat{r}(x, y) - \lambda \sum\limits_{y\in\calY} \pi_\textsf{ref}(y|x) \phi_\alpha\left(\frac{\pi(y|x)}{\pi_\textsf{ref}(y|x)}\right)\\
     & = \sum\limits_{y\in\calY} \pi_\textsf{ref}(y|x) \left( t(y|x)\hat{r}(x, y) - \lambda \phi_\alpha(t(y|x)) \right).
\end{aligned}
\end{equation}
The simplex constraint $\sum\limits_{y\in\calY} \pi(y|x) = 1$ becomes $\sum\limits_{y\in\calY} \pi_\textsf{ref}(y|x) t(y|x) = 1$.
The transformation in Eq.~\eqref{app:eq:convex-conjugate} actually forms the convex conjugate (Fenchel–Legendre transform) of the generator function, i.e.,
\begin{equation}
\begin{aligned}
    \psi_\alpha(z) = \sup\limits_{t>0}\left\{ zt-\lambda \phi_\alpha(t) \right\}.
\end{aligned}
\end{equation}
The optimizer $t^*(z) = \argmax\limits_{t > 0} zt-\lambda \phi_\alpha(t)$ can then be derived directly from the first-order condition
\begin{equation}
\begin{aligned}
    \frac{d}{dt} (zt - \phi_\alpha(t)\lambda ) &= z - \lambda \phi_\alpha'(t) = z - \lambda \frac{1}{\alpha(\alpha-1)} (\alpha t^{\alpha-1}-\alpha) = z - \lambda \frac{t^{\alpha-1}-1}{\alpha-1} = 0.
\end{aligned}
\end{equation}
By re-arranging and plugging $z = \hat{r}(x, y)$, we have
\begin{equation}
    t^{\alpha-1} = 1+\frac{\alpha-1}{\lambda} \hat{r}(x, y) \Rightarrow t^*(\hat{r}(x, y)) = \left[  1+(\alpha-1)\frac{\hat{r}(x, y)}{\lambda}\right]_+^{\frac{1}{\alpha-1}} = \exp_\alpha\left( \frac{\hat{r}(x, y)}{\lambda} \right),
\end{equation}
and therefore 
\begin{equation}
    \pi_\alpha(\cdot|x)\propto \pi_\textsf{ref}(\cdot|x) \exp_\alpha\left(\frac{\hat{r}(x, y)}{\lambda}\right).
\end{equation}
The simplex condition $\sum\limits_{y\in\calY} \pi_\alpha(y|x) = 1$ can be satisfied by finding a $\nu \geq 0$ such that $\sum\limits_{y\in\calY} \pi_\textsf{ref}(y|x) t^*(\hat{r}(x, y)+\nu)=1$.

\clearpage
\subsection{Proof of Proposition~\ref{prop:trend-a-kappa-lambda}}
We proceed in three parts.
\paragraph{Part 1: Algebraic reductions.}
Let $U:=1-\hat{r}$ and note $\hat{r}=1-U$. We use the parameters from the proposition statement: $\alpha \in (1,2]$ and $\lambda > 0$. The weight function is:
\begin{equation}
w_\alpha(\hat{r}) = \left[1+(\alpha-1)\frac{\hat{r}}{\lambda}\right]_+^{\frac{1}{\alpha-1}} = \left(1+\frac{\alpha-1}{\lambda}(1-U)\right)^{\frac{1}{\alpha-1}}.
\end{equation}

We factor out the endpoint value at $\hat{r}=1$ (i.e., $U=0$). 
Define the constants:\begin{equation}
A := 1+\frac{\alpha-1}{\lambda},\;\rho := \frac{(\alpha-1)/\lambda}{1+(\alpha-1)/\lambda} = \frac{\alpha-1}{\lambda A},\;\gamma := \frac{1}{\alpha-1}.
\end{equation}

Then we can rewrite the inner term as:
\begin{equation}
1+\frac{\alpha-1}{\lambda}(1-U) = 1 + \frac{\alpha-1}{\lambda} - \frac{\alpha-1}{\lambda}U = A - A\rho U = A(1-\rho U).
\end{equation}
Thus, the weight becomes:
\begin{equation}
w_\alpha(1-U) = \left(A(1-\rho U)\right)^\gamma = A^\gamma(1-\rho U)^\gamma.
\end{equation}

The moments of the weights under the reference distribution (parameterized by $\kappa$) are:
\begin{equation}\label{eq:m0-def}
\E_\kappa[w_\alpha(\hat{r})] = A^\gamma \E_\kappa[(1-\rho U)^\gamma] = A^\gamma m_0,\;\E_\kappa[w_\alpha(\hat{r})^2] = A^{2\gamma} \E_\kappa[(1-\rho U)^{2\gamma}] = A^{2\gamma} m_2,
\end{equation}
where we define $m_0(\alpha;\kappa,\lambda) := \E_\kappa[(1-\rho U)^\gamma]$ and $m_2(\alpha;\kappa,\lambda) := \E_\kappa[(1-\rho U)^{2\gamma}]$.
Therefore, the distortion ratio simplifies to:
\begin{equation}\label{eq:dist-ratio-m}\frac{|w_\alpha|{L^2}}{|w_\alpha|{L^1}} = \frac{\sqrt{\E\kappa[w_\alpha(\hat{r})^2]}}{\E_\kappa[w_\alpha(\hat{r})]} = \frac{\sqrt{A^{2\gamma}m_2}}{A^\gamma m_0} = \frac{\sqrt{m_2}}{m_0}.
\end{equation}

Since $w_\alpha(\hat{r})$ is increasing in $\hat{r}$, its supremum is at $\hat{r}=1$, so $\sup w_\alpha = A^\gamma$. 
The envelope constant for the exponential mechanism is:
\begin{equation}
M(\alpha) = \frac{\sup w_\alpha}{\E_\kappa[w_\alpha]} = \frac{A^\gamma}{A^\gamma m_0} = \frac{1}{m_0}.
\end{equation}

The probability of selecting the reference policy is $e^{-N M(\alpha)^{-1}} = e^{-N m_0}$. 
This yields the first term in \eqref{eq:B-def-clean}.
The proxy gain is:
\begin{equation}
\Delta(\pi_\textsf{BoT}) = \E_{\pi_\alpha}[\hat{r}] - \E_{\pi_\textsf{ref}}[\hat{r}] = \E_{\pi_\textsf{ref}}[U] - \E_{\pi_\alpha}[U],
\end{equation}
where we used $\hat{r}=1-U$.

\paragraph{Part 2: Heavy-tailed regime (large $\kappa$).}
In the heavy-tailed regime ($\kappa \to \infty$), $U$ concentrates near $0$. 
Using $U \sim \text{Beta}(1/\kappa, 1)$, we have the moments $\E[U] = \frac{1}{\kappa+1} = \frac{1}{\kappa} - \frac{1}{\kappa^2} + O(\kappa^{-3})$ and $\E[U^2] = \frac{1}{2\kappa+1} = \frac{1}{2\kappa} + O(\kappa^{-2})$.
We expand $(1-\rho U)^\gamma$ around $U=0$:
\begin{equation}
(1-\rho U)^\gamma = 1 - \gamma\rho U + \frac{\gamma(\gamma-1)}{2}\rho^2 U^2 + O(U^3).
\end{equation}
Taking expectations:
\begin{equation}
m_0 = 1 - \gamma\rho\E[U] + \frac{\gamma(\gamma-1)}{2}\rho^2\E[U^2] + O(\kappa^{-2}) = 1 + \frac{1}{\kappa}\left(-\gamma\rho + \frac{\gamma(\gamma-1)}{4}\rho^2\right) + O(\kappa^{-2}).
\end{equation}
Substituting $\gamma = \frac{1}{\alpha-1}$ and $\rho = \frac{(\alpha-1)/\lambda}{A}$:
\begin{equation}
\gamma\rho = \frac{1}{\alpha-1} \cdot \frac{\alpha-1}{\lambda A} = \frac{1}{\lambda A}.
\end{equation}
The quadratic coefficient involves $\gamma(\gamma-1) = \frac{1}{\alpha-1}(\frac{1}{\alpha-1}-1) = \frac{2-\alpha}{(\alpha-1)^2}$. 
Thus:
\begin{equation}
\frac{\gamma(\gamma-1)}{4}\rho^2 = \frac{2-\alpha}{4(\alpha-1)^2} \frac{(\alpha-1)^2}{\lambda^2 A^2} = \frac{2-\alpha}{4\lambda^2 A^2}.
\end{equation}
Combining these over a common denominator $4\lambda^2 A^2 = 4(\lambda+\alpha-1)^2$:
\begin{equation}m_0 = 1 + \frac{1}{\kappa} \frac{-4\lambda A + 2-\alpha}{4\lambda^2 A^2} + O(\kappa^{-2}) = 1 + \frac{1}{\kappa} \frac{-4\lambda - 5(\alpha-1) + 1}{4(\lambda+\alpha-1)^2} + O(\kappa^{-2}).
\end{equation}
The finite-$N$ term expands as $e^{-N m_0} \approx e^{-N}(1 - N(m_0-1))$, contributing the term matching \eqref{eq:B-large-kappa-clean}.

Expanding $m_2 = \E[(1-\rho U)^{2\gamma}]$ similarly leads to $m_2 = 1 + \frac{1}{\kappa}(-2\gamma\rho + \frac{2\gamma(2\gamma-1)}{4}\rho^2) + O(\kappa^{-2})$.
The ratio $\sqrt{m_2}/m_0$ expands as:
\begin{equation}
\frac{\sqrt{m_2}}{m_0} \approx 1 + \frac{1}{\kappa}\left(\frac{1}{2}\text{coeff}(m_2) - \text{coeff}(m_0)\right) = 1 + \frac{1}{\kappa}\frac{\gamma^2 \rho^2}{4}.
\end{equation}
Substituting $\gamma\rho = \frac{1}{\lambda+\alpha-1}$:
\begin{equation}
\frac{\sqrt{m_2}}{m_0} = 1 + \frac{1}{4\kappa(\lambda+\alpha-1)^2}.
\end{equation}
This yields the first term in \eqref{eq:B-large-kappa-clean}.

We compute $\Delta(\pi_\textsf{BoT}) = \E[U] - \E_{\pi_\alpha}[U]$.
Using the integral approximation for large $\kappa$:
\begin{equation}
\E_{\pi_\alpha}[U] = \frac{1}{\kappa} \int_0^1 (1-\rho u)^\gamma du + O(\kappa^{-2}) = \frac{1}{\kappa} \frac{1-(1-\rho)^{\gamma+1}}{\rho(\gamma+1)} + O(\kappa^{-2}).
\end{equation}
Linearizing near $\alpha=2$ (where $\gamma=1$), the integral is $1-\rho/2$. 
Thus:\begin{equation}
\Delta(\pi_\textsf{BoT}) \approx \frac{1}{\kappa} - \frac{1}{\kappa}(1-\rho/2) = \frac{\rho}{2\kappa} = \frac{1}{\kappa} \frac{\alpha-1}{2(\lambda+\alpha-1)}.
\end{equation}
This matches the final term in \eqref{eq:B-large-kappa-clean}.

\paragraph{Part 3: Light-tailed regime (small $\kappa$).}
In this regime ($\kappa \to 0$), $\hat{r}$ concentrates near $0$.
From $U \sim \text{Beta}(1/\kappa, 1)$, $\hat{r} = 1-U$ has moments:
\begin{equation}
\E[\hat{r}] \approx \kappa - \kappa^2 + \kappa^3, \quad \E[\hat{r}^2] \approx 2\kappa^2 - 6\kappa^3.
\end{equation}
Using the expansion $w_\alpha(\hat{r}) \approx 1 + \frac{\hat{r}}{\lambda} + \frac{2-\alpha}{2\lambda^2}\hat{r}^2$:
\begin{equation}\E[w_\alpha] \approx 1 + \frac{\kappa}{\lambda} + \kappa^2\left(\frac{2-\alpha}{\lambda^2} - \frac{1}{\lambda}\right),\;\E[w_\alpha^2] \approx 1 + \frac{2\kappa}{\lambda} + \kappa^2\left(\frac{2(3-\alpha)}{\lambda^2} - \frac{2}{\lambda}\right).
\end{equation}
The gain $\Delta = \E_{\pi_\alpha}[\hat{r}] - \E[\hat{r}]$ is dominated by the covariance term $\text{Cov}(w_\alpha, \hat{r})/\E[w_\alpha]$, and $\E_{\pi_\alpha}[\hat{r}] \approx \kappa + \frac{\kappa^2}{\lambda}$.
Thus $\Delta \approx \frac{\kappa^2}{\lambda}$. 
Including higher order terms leads to the expansion in \eqref{eq:B-small-kappa-clean}.
The squared distortion is:\begin{equation}
\text{Dist}^2 = \frac{\E[w_\alpha^2]}{\E[w_\alpha]^2} \approx 1 + \frac{\kappa^2}{\lambda^2}.
\end{equation}
Taking the square root yields $\text{Dist} \approx 1 + \frac{\kappa^2}{2\lambda^2}$.
Combining these expansions yields the asymptotic results stated in the proposition.

\clearpage
\section{Additional Discussion and Experimental Setup}\label{app:sec-additional}
We detail the Hill estimator in \S~\ref{sec:bot-implement}, elucidating its sample complexity advantages over estimating the entire reward distribution and outlining our experimental implementation.

\subsection{Tail Estimation via the Hill Estimator}\label{app:sec-hill}
To implement the prompt-adaptive strategy, we require a robust estimate of the tail index $\kappa(x)$ from a finite set of $N$ candidate responses. 
We adopt the Hill estimator \citep{hill1975simple}, the standard semi-parametric estimator for heavy-tailed distributions.
Standard extreme value theory typically deals with unbounded random variables; however, our proxy rewards are bounded, $\hat{r}(x,y) \in [0, 1]$. 
To bridge this, we analyze the reward gaps (or regrets):\begin{equation}
U(x,y) = 1 - \hat{r}(x,y).
\end{equation}
The ``tail'' of the reward distribution (where $\hat{r} \approx 1$) corresponds to the limit $U \to 0$. 
As established in our modeling assumption (Prop.~\ref{prop:bon-itp-tail}), the tail behavior is characterized by $\Pr(U \le u) \approx u^{1/\kappa}$.
By applying the transformation $Z = 1/U = (1 - \hat{r})^{-1}$, we map the problem to a standard Pareto tail estimation where $\Pr(Z > z) = \Pr(U < 1/z) \approx z^{-1/\kappa}$. 
Here, $\kappa$ appears precisely as the shape parameter (tail index) of the resulting Pareto distribution.

Given $N$ sampled responses, let $r_{(1)} \ge r_{(2)} \ge \dots \ge r_{(N)}$ denote the order statistics of the rewards. 
These correspond to the ordered transformed variables $Z_{(1)} \ge Z_{(2)} \dots$, where $Z_{(i)} = (1 - r_{(i)})^{-1}$.
The Hill estimator uses the top $K$ order statistics to estimate $\kappa$:
\begin{equation}\label{eq:hill-derivation}
\begin{aligned}
\hat{\kappa} &= \frac{1}{K} \sum_{i=1}^K \log \left( \frac{Z_{(i)}}{Z_{(K+1)}} \right) = \frac{1}{K} \sum_{i=1}^K \log \left( \frac{(1 - r_{(i)})^{-1}}{(1 - r_{(K+1)})^{-1}} \right) = \frac{1}{K} \sum_{i=1}^K \log \left( \frac{1 - r_{(K+1)}}{1 - r_{(i)}} \right).
\end{aligned}
\end{equation}

The hyperparameter $K$ determines the ``tail cut-off''---the boundary between the distribution's tail and its body.
\begin{itemize}
\item Small $K$: Uses only the most extreme data points. This minimizes bias (as the asymptotic tail approximation is most accurate here) but increases variance due to the small sample size.
\item Large $K$: Incorporates more samples, reducing variance. However, if $K$ is too large, it includes samples from the central body of the distribution where the power-law assumption fails, introducing significant bias.
\end{itemize}
In \texttt{BoT}, we fix $K$ to a small fraction of $N$ (e.g., $K \approx \sqrt{N}$ or $5\%-10\%$) to prioritize low bias, ensuring $\alpha(x)$ reacts only to the true extremal behavior of the prompt.

\clearpage
\subsection{Sample Complexity: Tail Index vs. Full Distribution}\label{app:sec-tail-vs-full}
A fundamental advantage of the \texttt{BoT} framework is that it adapts to the reward landscape without requiring a full estimate of the reward distribution. 
Estimating the complete probability density function $p(r)$ is a non-parametric problem subject to the curse of dimensionality and slow convergence rates. 
In contrast, estimating the tail index $\kappa$ is a semi-parametric problem that enjoys faster statistical convergence. 
We formalize this distinction below.

Consider the task of estimating the underlying reward density $p(r)$ from $N$ samples to within an error $\epsilon$ in total variation distance or $L_2$ norm. 
The minimax convergence rate for estimating a generic smooth density (Hölder class $\Sigma(\beta, L)$) in $d=1$ dimension is
\begin{equation}
\inf_{\hat{p}} \sup_{p \in \Sigma} \E \left[ | \hat{p} - p | \right] \asymp N^{-\frac{\beta}{2\beta+1}},
\end{equation}
where $\beta$ is the smoothness parameter. 
For typical values (e.g., $\beta=2$), this yields a rate of $N^{-2/5}$, which is significantly slower than the parametric rate $N^{-1/2}$. 
This implies that accurately characterizing the entire reward landscape to optimize a policy would require a prohibitively large number of samples $N$.

Conversely, \texttt{BoT} relies only on the tail index $\kappa$, a single scalar summary of the extreme value behavior. 
Under standard Second-Order Regular Variation conditions, the Hill estimator $\hat{\kappa}$ (using the top $K$ statistics) is asymptotically normal \citep{de1998asymptotic}:
\begin{equation}
\sqrt{K}\left( \hat{\kappa}_K - \kappa \right) \xrightarrow{d} \mathcal{N}(0, \kappa^2).
\end{equation}
This indicates that the estimation error decays at the parametric rate $O(1/\sqrt{K})$. Crucially, the sample complexity to estimate $\kappa$ to a fixed precision $\epsilon$ depends on the number of \emph{tail samples} $K$, not the complexity of the full distribution's support.

In our design, $\alpha(x)$ (Eq.~\eqref{eq:hill-to-alpha}) is a smooth, bounded monotonic function of $\kappa$, we do not need infinite precision in $\hat{\kappa}$. 
We essentially perform a ``soft classification'' of the tail regime (heavy vs. light). 
The sample complexity to distinguish these regimes is low; even with moderate sample sizes (e.g., $N \approx 50$), the variance of $\hat{\kappa}$ is sufficiently small to steer $\alpha(x)$ towards the correct end of the interval $[1, 2]$. 
This makes \texttt{BoT} highly sample-efficient compared to methods that might attempt to model the full reward surface.

\clearpage
\subsection{Experimental Setup}\label{app:sec-exp-expsetup}
We conduct a comprehensive empirical analysis across a diverse suite of reasoning and instruction-following tasks. Our evaluation protocol encompasses four standard benchmarks, three instruction-tuned reference policies, and four diverse reward models.

We evaluate performance on the following four benchmarks:
\begin{enumerate}
    \item GSM8K (Grade School Math) \citep{cobbe2021training}: A standard benchmark for multi-step mathematical reasoning. We utilize the test split, comprising 1,319 high-quality grade-school math problems with integer answers. We define the oracle reward $r^\star(x, y)$ as a binary metric: $r^\star(x, y) = 1$ if the final numerical answer is correct, and 0 otherwise.
    
    \item MMLU (Massive Multitask Language Understanding) \citep{hendrycks2020measuring}: To assess knowledge-intensive reasoning in specialized STEM domains, we utilize the \emph{College Mathematics} and \emph{College Chemistry} test splits (approx. 100 questions each). Unlike GSM8K, these tasks use a multiple-choice format. Correctness is determined by matching the model's final selection to the ground truth key.
    
    \item MATH (Mathematics Aptitude Test of Heuristics) \citep{hendrycks2021measuring}: For competition-level mathematics requiring complex heuristics, we construct an evaluation set by randomly sampling 512 instances from the MATH test split. These problems are more challenging, and require complex heuristic application and advanced reasoning beyond standard grade-school arithmetic. Similar to GSM8K, the oracle reward is binary based on the exact match of the final calculated result.
    
    \item AlpacaFarm \citep{dubois2023alpacafarm}: Designed to simulate "in-the-wild" interactions, this dataset covers diverse tasks from creative writing to open-ended QA. Following \citet{coste2023reward}, we use the standard test set of 805 prompts. Since open-ended tasks lack verifiable ground truth, we employ \texttt{Alpaca-RM} \citep{dubois2023alpacafarm}---fine-tuned from LLaMA-7B on human preference data—as the oracle reward $r^\star$ for evaluation.
\end{enumerate}

For the reasoning tasks (GSM8K, MMLU, MATH) with verifiable ground truth, we consider three instruction-tuned reference policies ($\pi_\textsf{ref}$): \texttt{Gemma-2-2B} \citep{team2024gemma}, \texttt{Llama-3-3B} \citep{grattafiori2024llama}, and \texttt{Mistral-7B} \citep{jiang2023mistral}. 
For AlpacaFarm, we follow prior work and use the \texttt{Pythia-1.4B} model \citep{biderman2023pythia}.

We pair these policies with four diverse proxy reward models ($\hat{r}$):
\begin{enumerate}
\item \texttt{OASST-RM} (1.4B): Based on Pythia-1.4B \citep{kopf2023openassistant}.
\item \texttt{Gemma-RM} (2B): Trained from \texttt{google/gemma-2b-it} \citep{dong2023raft}.
\item \texttt{Llama-RM} (3B): A generalizable RM trained from \texttt{meta/llama-3.2-3B} \citep{yang2024regularizing}.
\item \texttt{ARMO-RM} (8B): A mixture-of-experts model trained from \texttt{meta/llama-3-8B} \citep{wang2024interpretable}.
\end{enumerate}

We prompt all reference policies using zero-shot Chain-of-Thought (CoT) \citep{wei2022chain}, i.e., without any in-context demonstrations, to elicit reasoning. 
To maximize response diversity, we set the temperature to 1.0 and top-$p$ to 1.0 (no nucleus sampling), limiting generation to 512 tokens.
For each unique combination of dataset, policy, and reward model, we pre-generate a large candidate pool of $2^{12} = 4,096$ responses per prompt.
For AlpacaFarm, we generate 12,600 responses per prompt following \citealp{coste2023reward}. 
We simulate experiments by bootstrapping subsets of varying size $N$ from this pool, reporting results averaged over 10 independent trials to ensure statistical robustness.

For the Hill estimator (Eq.~\eqref{eq:hill-estimator}) in \texttt{BoT}, we adaptively set the tail cutoff as $K=\max(1, \sqrt{N})$ to balance bias and variance as $N$ scales. 
To calibrate the mapping function $\alpha(x)$, we select the pivot $\kappa_0$ based on the reward model's typical behavior; specifically, we set $\kappa_0$ to the median of the estimated tail indices $\{\hat{\kappa}(x_i)\}_{i=1}^M$ over the set of $M$ prompts. This centers the sigmoid mapping, preventing the policy from saturating into purely optimistic or pessimistic modes, and ensuring \texttt{BoT} remains sensitive to relative tail risk variations across prompts.

\clearpage 
\section{Additional Experiments and Ablation Studies}\label{app:sec:additional-and-ablation}
We include additional experimental results and comparisons, including: (i) comprehensive results of all configurations of the reference policies and reward models for each datasets, similar to Figures~\ref{fig:main-exp} and~\ref{fig:alpacafarm-exp} in the main text; (ii) additional results of selected configurations in Figures~\ref{fig:main-exp}, including true reward $r^*$ over $N$, proxy reward $\hat{r}$ over $N$. and the proxy reward over distortions (i.e., the RLHF objective in Eq.~\eqref{eq:alpha-rlhf}); (iii) ablation studies on the steering temperature $\lambda$ and tail pivot $\kappa_0$ in Algorithm~\ref{alg:bot}.

\subsection{The Complete Results over all Datasets, Reference Policies, and Reward Models}\label{app:sec:all-exp}
We show the trajectory of the true reward ($r^*$) versus the proxy reward ($\hat{r}$) as the sample size $N$ scales exponentially from $2^0$ to $2^{10}$, i.e., $N\in\{2^0. 2^1, \cdots, 2^9, 2^{10}\}$. 
These detailed results span all combinations of reference policies and reward models, i.e., \{ \texttt{Gemma-2-2B}, \texttt{Llama-3-3B}, \texttt{Mistral-7B} \} $\times$ \{ \texttt{OASST-RM}, \texttt{Gemma-RM}, \texttt{Llama-RM}, \texttt{ARMO-RM} \} for GSM8K (Figure~\ref{fig:app:gsm8k-all-acc-vs-reward}), MMLU (Figure~\ref{fig:app:mmlu-all-acc-vs-reward}), MATH (Figure~\ref{fig:app:math-all-acc-vs-reward}), and \{ \texttt{Pythia-1.4B}\} $\times$ \{ \texttt{OASST-RM}, \texttt{Gemma-RM}, \texttt{Llama-RM}, \texttt{ARMO-RM} \} for AlpacaFarm (Figure~\ref{fig:app:alpaca-all-acc-vs-reward}).

Across these configurations, we observe distinct failure modes for the baselines: optimistic strategies, such as \texttt{BoN} and \texttt{sBoN}, consistently succumb to reward hacking at large $N$, while \texttt{ITP} remains overly conservative, often failing to adequately explore high-reward regions—a finding consistent with \citet{huang2025best}. In contrast, \texttt{BoT} demonstrates robust adaptivity. Depending on the reward landscape, it either outperforms all baselines in high-variance regimes (e.g., \texttt{Mistral-7B}/\texttt{Gemma-RM} on MATH), matches the efficiency of \texttt{BoN} in light-tailed regime (e.g., \texttt{Llama-3-3B}/\texttt{Llama-RM} on MMLU), or converges to the safety of \texttt{ITP} when heavy-tailed rewards are dominant (e.g., \texttt{Gemma-2-2B}/\texttt{Gemma-RM} on GSM8K).

\begin{figure*}[b!]
  \vskip 0.2in
  \begin{center}
    \centerline{\includegraphics[width=.9\textwidth]{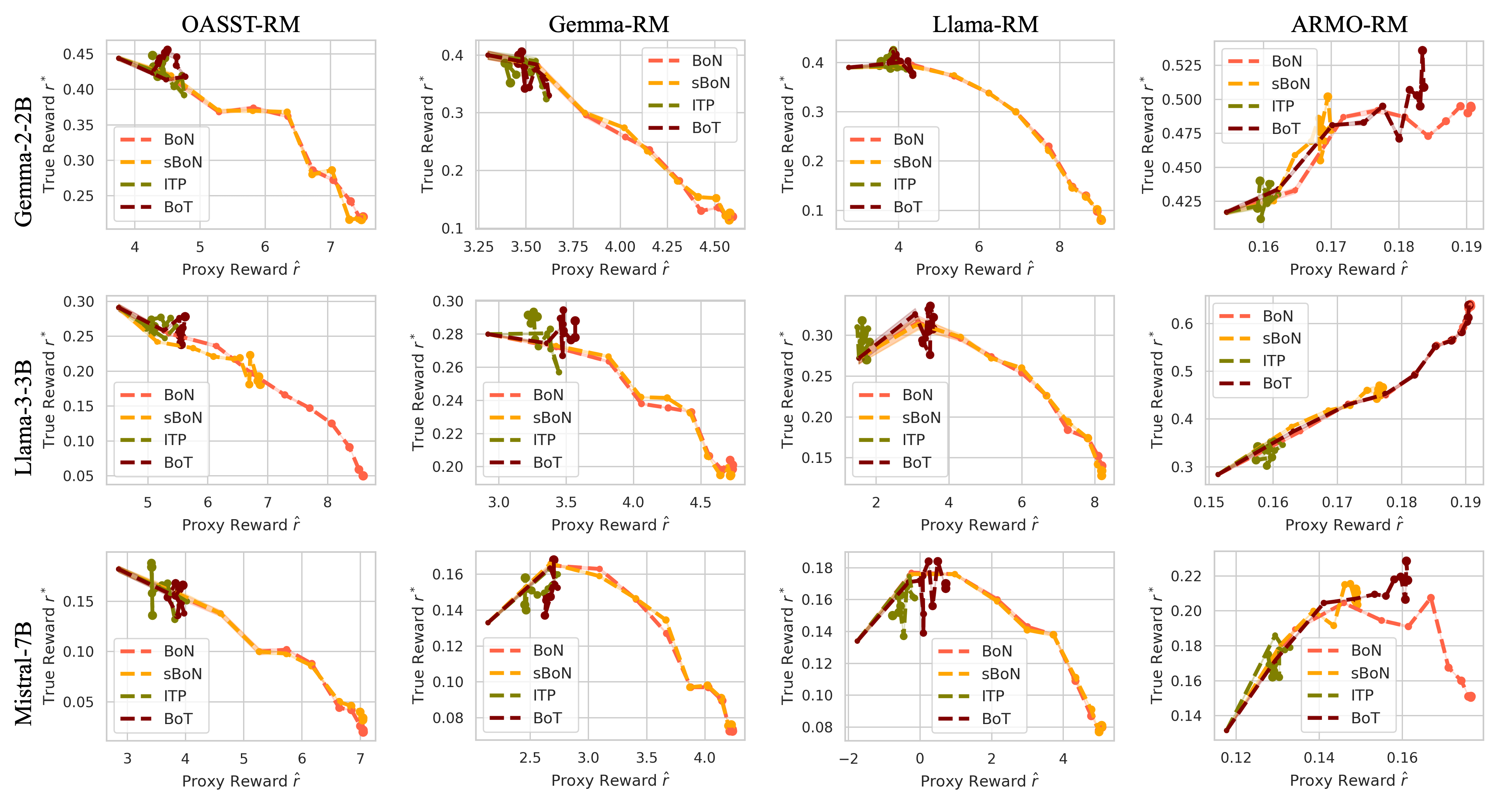}}
    \caption{\small 
    Comparison of optimistic (\texttt{sBoN}/\texttt{BoN}), pessimistic (\texttt{ITP}), and adaptive (\texttt{BoT}) strategies on GSM8K across different reference models and proxy rewards. 
    Each curve traces the trajectory of True Reward ($r^*$) vs. Proxy Reward ($\hat{r}$) as the sample size $N$ increases from $2^0$ to $2^{10}$, with marker size proportional to $N$. The smallest marker represents $N=1$ (standard sampling), while the largest represents $N=1024$.
    }
    \label{fig:app:gsm8k-all-acc-vs-reward}
  \end{center}
\end{figure*}

\clearpage
\begin{figure*}[b!]
  \begin{center}
    \centerline{\includegraphics[width=.9\textwidth]{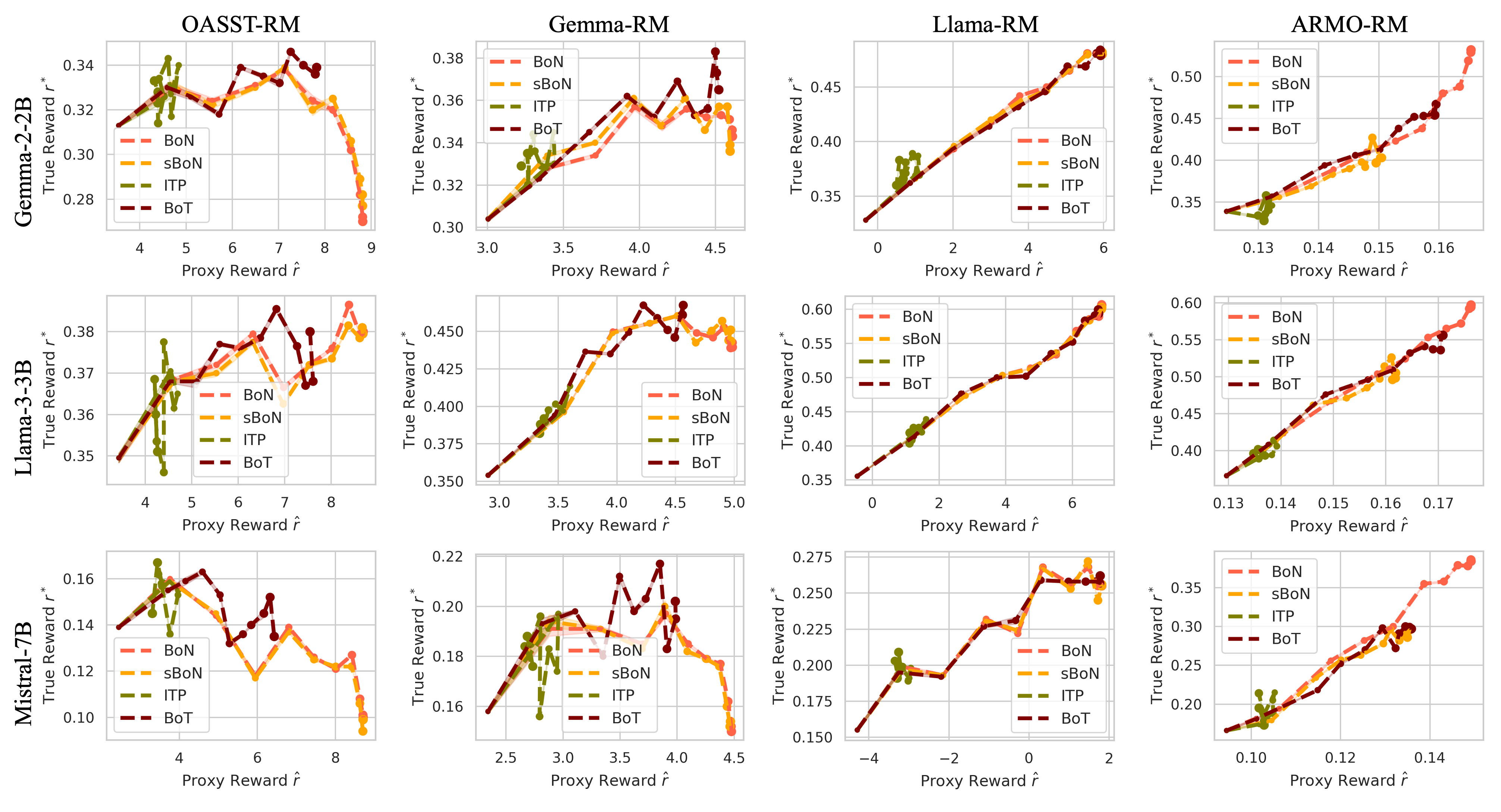}}
    \caption{\small 
    Comparison of optimistic (\texttt{sBoN}/\texttt{BoN}), pessimistic (\texttt{ITP}), and adaptive (\texttt{BoT}) strategies on MMLU across different reference models and proxy rewards. 
    Each curve traces the trajectory of True Reward ($r^*$) vs. Proxy Reward ($\hat{r}$) as the sample size $N$ increases from $2^0$ to $2^{10}$, with marker size proportional to $N$. The smallest marker represents $N=1$ (standard sampling), while the largest represents $N=1024$.
    }
    \label{fig:app:mmlu-all-acc-vs-reward}
  \end{center}
\end{figure*}

\begin{figure*}[b!]
  \begin{center}
    \centerline{\includegraphics[width=.9\textwidth]{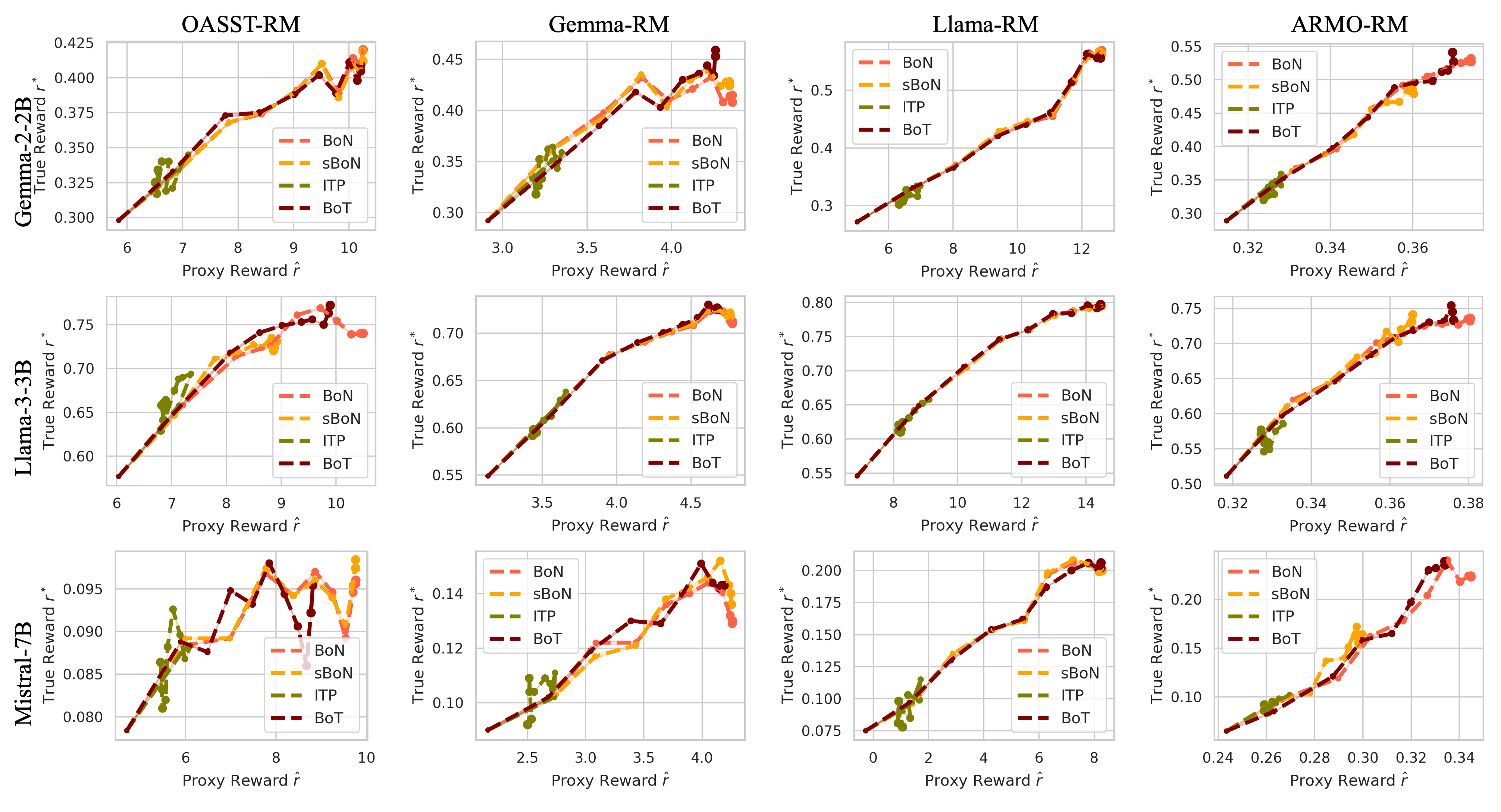}}
    \caption{\small 
    Comparison of optimistic (\texttt{sBoN}/\texttt{BoN}), pessimistic (\texttt{ITP}), and adaptive (\texttt{BoT}) strategies on MATH across different reference models and proxy rewards. 
    Each curve traces the trajectory of True Reward ($r^*$) vs. Proxy Reward ($\hat{r}$) as the sample size $N$ increases from $2^0$ to $2^{10}$, with marker size proportional to $N$. The smallest marker represents $N=1$ (standard sampling), while the largest represents $N=1024$.
    }
    \label{fig:app:math-all-acc-vs-reward}
  \end{center}
\end{figure*}

 \clearpage
\begin{figure*}[h!]
  \begin{center}
    \centerline{\includegraphics[width=.9\textwidth]{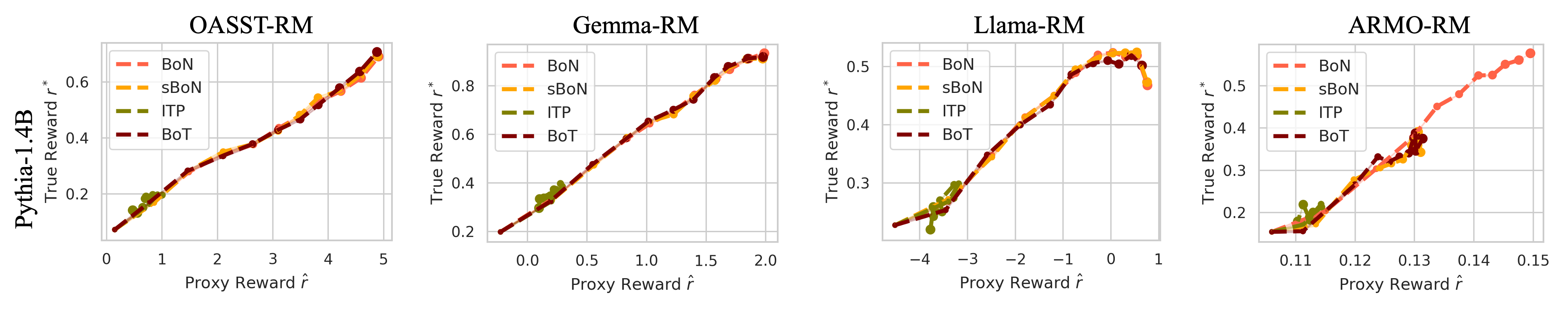}}
    \caption{\small 
    Comparison of optimistic (\texttt{sBoN}/\texttt{BoN}), pessimistic (\texttt{ITP}), and adaptive (\texttt{BoT}) strategies on AlpacaFarm across different reference models and proxy rewards. 
    Each curve traces the trajectory of True Reward ($r^*$) vs. Proxy Reward ($\hat{r}$) as the sample size $N$ increases from $2^0$ to $2^{10}$, with marker size proportional to $N$. The smallest marker represents $N=1$ (standard sampling), while the largest represents $N=1024$.
    }
    \label{fig:app:alpaca-all-acc-vs-reward}
  \end{center}
\end{figure*}

\clearpage
\subsection{Additional Results on Selected Configurations in Figures~\ref{fig:main-exp}}\label{app:sec:additional-exp}
We present the trajectory of true reward $r^*$ and proxy reward $\hat{r}$ versus the sample size $N$ increases (top row), alongside the trade-off (bottom row) between proxy reward and policy distortion (KL, $\chi^2$, and Tsallis divergences) on GSM8K (Figure~\ref{fig:app:gsm8k-addition}), MMLU (Figure~\ref{fig:app:mmlu-addition}), and MATH (Figure~\ref{fig:app:math-addition}).

While \texttt{BoN} maximizes the raw proxy reward, it suffers from reward hacking at large $N$, leading to lower true utility (accuracy). 
In contrast, \texttt{BoT} achieves the highest true reward among all strategies, surpassing both the conservative \texttt{ITP} and the optimistic baselines. 
The distortion analysis further validates the efficiency of \texttt{BoT}: it attains a higher proxy reward than \texttt{ITP} for the same level of distortion, yet maintains significantly tighter control over divergence than \texttt{sBoN} or \texttt{BoN} as $N$ scales, effectively balancing exploration with robustness.

\begin{figure*}[b!]
  \begin{center}
    \centerline{\includegraphics[width=.8\textwidth]{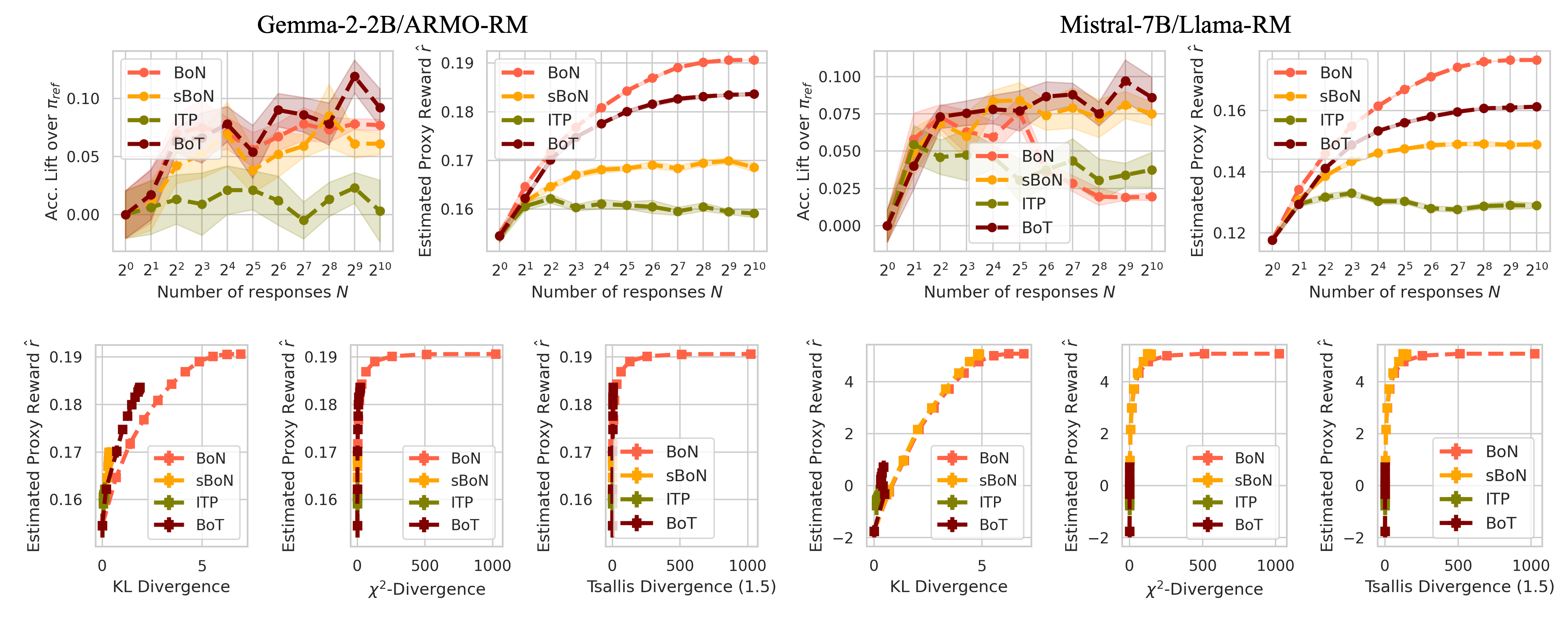}}
    \caption{\small 
    The trajectory of true reward $r^*$ and proxy reward $\hat{r}$ versus the sample size $N$ increases (top row), alongside the trade-off (bottom row) between proxy reward and policy distortion (KL, $\chi^2$, and Tsallis divergences) on GSM8K. 
    We compare optimistic (\texttt{sBoN}, \texttt{BoN}), pessimistic (\texttt{ITP}), and adaptive (\texttt{BoT}) strategies using select configurations from Figure~\ref{fig:main-exp}.
    }
    \label{fig:app:gsm8k-addition}
  \end{center}
\end{figure*}

\begin{figure*}[b!]
  \begin{center}
    \centerline{\includegraphics[width=.8\textwidth]{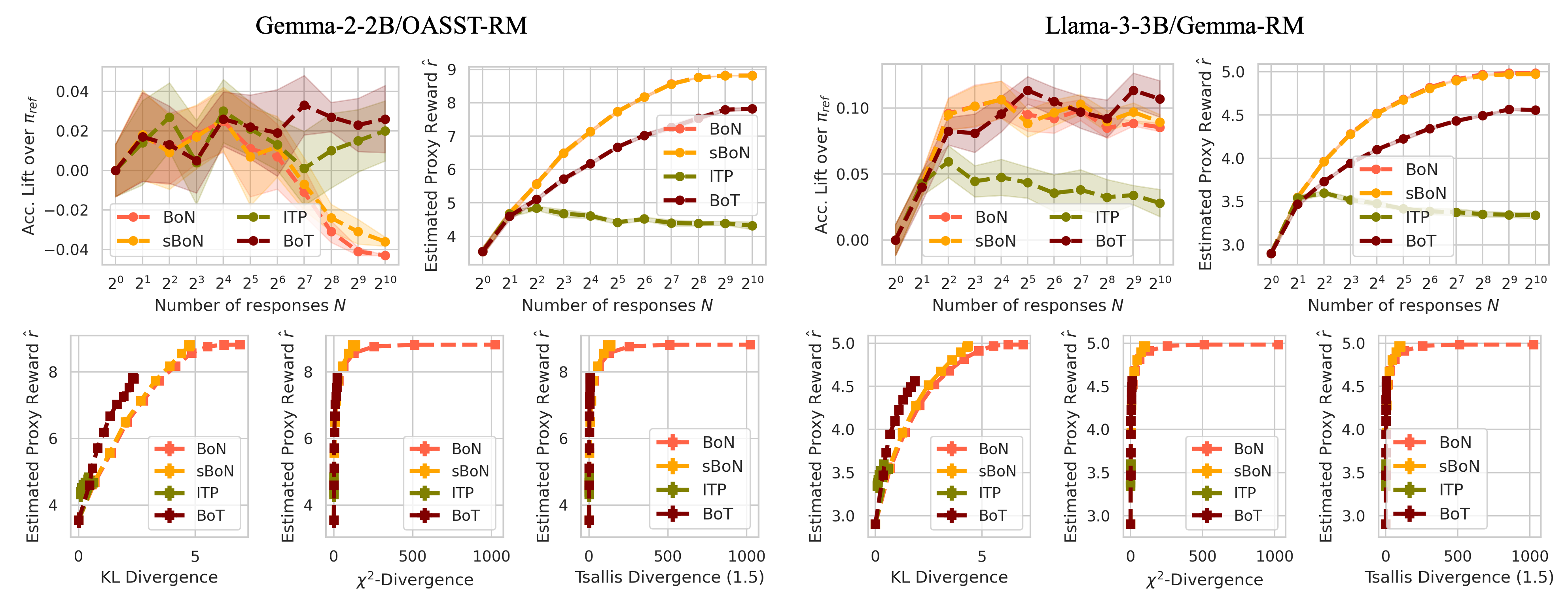}}
    \caption{\small 
    The trajectory of true reward $r^*$ and proxy reward $\hat{r}$ versus the sample size $N$ increases (top row), alongside the trade-off (bottom row) between proxy reward and policy distortion (KL, $\chi^2$, and Tsallis divergences) on MMLU. 
    We compare optimistic (\texttt{sBoN}, \texttt{BoN}), pessimistic (\texttt{ITP}), and adaptive (\texttt{BoT}) strategies using select configurations from Figure~\ref{fig:main-exp}.
    }
    \label{fig:app:mmlu-addition}
  \end{center}
\end{figure*}

\clearpage
\begin{figure*}[h!]
  \begin{center}
    \centerline{\includegraphics[width=.8\textwidth]{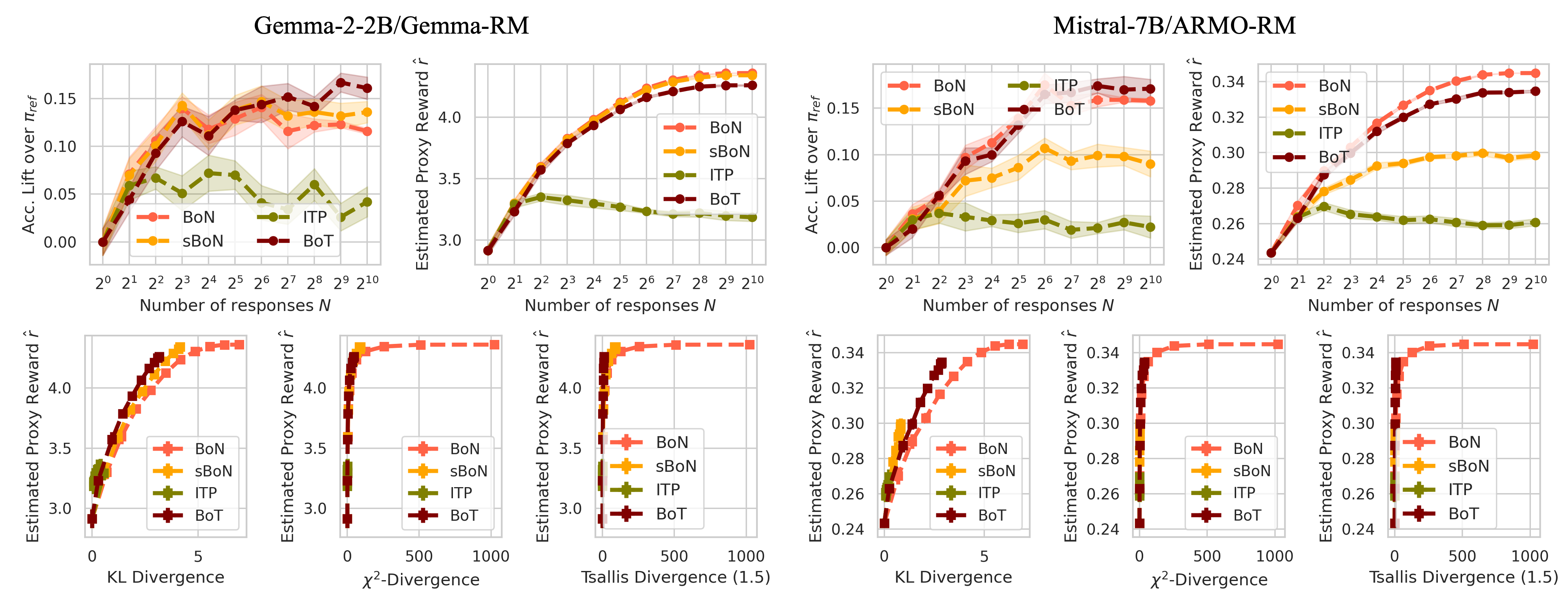}}
    \caption{\small 
    The trajectory of true reward $r^*$ and proxy reward $\hat{r}$ versus the sample size $N$ increases (top row), alongside the trade-off (bottom row) between proxy reward and policy distortion (KL, $\chi^2$, and Tsallis divergences) on MATH. 
    We compare optimistic (\texttt{sBoN}, \texttt{BoN}), pessimistic (\texttt{ITP}), and adaptive (\texttt{BoT}) strategies using select configurations from Figure~\ref{fig:main-exp}.
    }
    \label{fig:app:math-addition}
  \end{center}
\end{figure*}

\clearpage
\subsection{Ablation Studies}\label{app:sec:ablation}
We conduct ablation studies to analyze the sensitivity of the steering temperature $\lambda$ and the tail pivot $\kappa_0$ on the GSM8K dataset, utilizing \texttt{Gemma-2-2B} as the reference policy and \texttt{ARMO-RM} as the reward model. 
For all experiments in this section, we fix the sample size at $N=1024$.

\paragraph{Steering Temperature $\lambda$ (Figure~\ref{fig:app:gsm8k-gemma-armo-ablation}, Left).}
We first fix $\kappa_0$ to the median of $\hat{\kappa}(x)$ across prompts and sweep $\lambda \in \{0.001, \dots, 1.0\}$.
In the high-temperature regime (large $\lambda$), the re-weighting term $\exp(r/\lambda)$ flattens, causing the candidate distribution to approach uniformity; consequently, the behaviors of \texttt{sBoN}, \texttt{ITP}, and \texttt{BoT} converge.
As $\lambda$ decreases, the strategies diverge in how they prioritize high-reward responses. 
\texttt{ITP} exhibits stable performance due to its linear constraint, whereas \texttt{sBoN} becomes increasingly aggressive, converging toward the hard-maximization behavior of standard \texttt{BoN} (i.e., placing all weight on the single highest-reward candidate).
Crucially, while \texttt{BoT} does not always maximize the raw proxy reward—avoiding the over-optimization trap—its adaptive $\alpha(x)$ allows it to achieve higher \emph{true} reward (accuracy) by mitigating tail risks that optimistic baselines ignore.

\paragraph{Tail Pivot $\kappa_0$ (Figure~\ref{fig:app:gsm8k-gemma-armo-ablation}, Right).}
Next, we fix $\lambda=0.01$ and sweep $\kappa_0 \in \{0.001, \dots, 5.0\}$.
Since \texttt{sBoN} and \texttt{ITP} are independent of $\kappa_0$, they appear as constant baselines. 
The parameter $\kappa_0$ governs the transition point of our adaptive strategy:
\begin{itemize}
\item \textbf{Optimistic Limit ($\kappa_0 \to \infty$):} 
As $\kappa_0$ increases, the term $\frac{\hat{\kappa}(x)}{\hat{\kappa}(x)+\kappa_0}$ vanishes, driving $\alpha(x) = 1+\frac{\hat{\kappa}(x)}{\hat{\kappa}(x)+\kappa_0} \to 1$. In this regime, \texttt{BoT} mimics the optimistic behavior of \texttt{sBoN}, as reflected in the proxy reward curves.
\item \textbf{Pessimistic Limit ($\kappa_0 \to 0$):} 
Conversely, as $\kappa_0$ decreases below the median of $\hat{\kappa}(x)$, the ratio approaches 1, driving $\alpha(x) = 1+\frac{\hat{\kappa}(x)}{\hat{\kappa}(x)+\kappa_0} \to 2$. Here, \texttt{BoT} shifts toward the conservative, pessimistic profile of \texttt{ITP}.
\end{itemize}
The results demonstrate that by selecting an intermediate $\kappa_0$ (calibrated to the median tail index), \texttt{BoT} successfully interpolates between these regimes, outperforming static baselines by adapting to the specific risk profile of each prompt.

\begin{figure*}[h!]
  \begin{center}
    \centerline{\includegraphics[width=\textwidth]{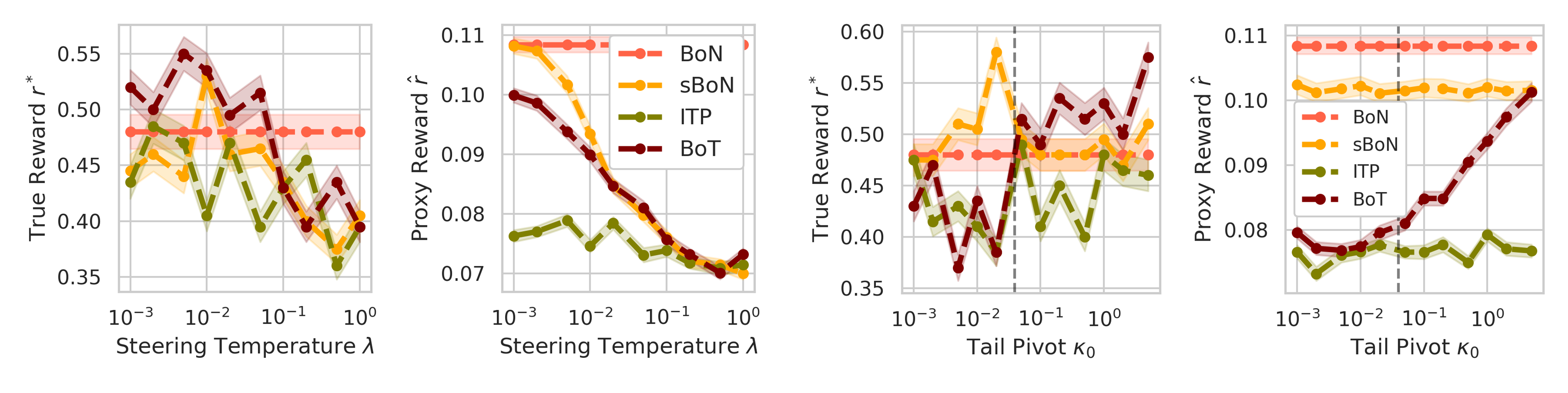}}
    \caption{\small 
    Ablation analysis of steering temperature $\lambda$ (\textbf{left two figures}) and tail pivot $\kappa_0$ (\textbf{right  two figures}). We compare optimistic (\texttt{sBoN}, \texttt{BoN}), pessimistic (\texttt{ITP}), and adaptive (\texttt{BoT}) strategies. The black dashed line marks the median of the estimated tail indices $\hat{\kappa}(x)$ across all prompts.
    }
    \label{fig:app:gsm8k-gemma-armo-ablation}
  \end{center}
\end{figure*}

\end{document}